\titlespacing\section{0pt}{2pt}{0pt}
\titlespacing\subsection{0pt}{0pt}{0pt}
\titlespacing\bfsection{0pt}{0pt}{0pt}
\theoremstyle{plain}
\newtheorem{theorem}{Theorem}[section]
\theoremstyle{definition}
\theoremstyle{remark}
\newcommand{\bfsection}[1]{\noindent\textbf{#1}}
\newcommand{\algname}{\textsc{ACS}\xspace}
\newcommand{\thmref}[1]{Theorem~\ref{#1}}
\newcommand{\tabref}[1]{Table~\ref{#1}}
\newcommand{\figref}[1]{Fig.~\ref{#1}}
\newcommand{\eqnref}[1]{\text{Eq.}~(\ref{#1})}
\newcommand{\secref}[1]{\S\ref{#1}}
\newcommand{\appref}[1]{Appendix~\ref{#1}}
\title{\LARGE \bf
Safe Reinforcement Learning via Hierarchical \\Adaptive Chance-Constraint Safeguards
}
\author{Zhaorun Chen$^{1}$, Zhuokai Zhao$^{1}$,
Tairan He$^{2}$, Binhao Chen$^{3}$, Xuhao Zhao$^{3}$, Liang Gong$^{3}$ and Chengliang Liu$^{3}$
\thanks{$^{1}$Zhaorun Chen and Zhuokai Zhao are with the Department of Computer Science, University of Chicago, USA.
        {\tt\small \{zhaorun, zhuokai\}@uchicago.edu}
        }%
\thanks{$^{2}$Tairan He is with the Robotics Institute, Carnegie Mellon University, USA. {\tt\small tairanh@andrew.cmu.edu}.}
\thanks{$^{3}$Binhao Chen, Xuhao Zhao, Liang Gong and Chengliang Liu are with the Department of Mechanical Engineering, Shanghai Jiao Tong University
        {\tt\small \{cbh\_mage, rachmaninov, gongliang\_mi, chlliu\}@sjtu.edu.cn}
        }%
}
\begin{document}

\maketitle
\thispagestyle{empty}
\pagestyle{empty}

\begin{abstract}
%
Ensuring safety in Reinforcement Learning (RL), typically framed as a Constrained Markov 
Decision Process (CMDP), is crucial for real-world exploration applications.
%
%
%
Current approaches in handling CMDP struggle to balance optimality and feasibility, as 
direct optimization methods cannot ensure state-wise in-training safety, and 
projection-based methods correct actions inefficiently through lengthy iterations.
%
To address these challenges, we propose Adaptive Chance-constrained Safeguards (ACS), an 
adaptive, model-free safe RL algorithm using the safety recovery rate as a surrogate 
chance constraint to iteratively ensure safety during exploration and after achieving 
convergence.
Theoretical analysis indicates that the relaxed probabilistic constraint sufficiently 
guarantees forward invariance to the safe set. 
%
%
And extensive experiments conducted on both simulated and real-world safety-critical tasks 
demonstrate its effectiveness in enforcing safety (nearly zero-violation) while preserving 
optimality (+23.8\%), robustness, and fast response in stochastic real-world settings.
\end{abstract}

\section{Introduction}\label{sec:introduction}
Reinforcement learning (RL) has demonstrated remarkable success in handling nonlinear 
stochastic control problems with large
uncertainties~\cite{bertsekas2019reinforcement,johannink2019residual}. 
Although solving unconstrained optimization problems in simulations incurs no
safety concerns, ensuring safety during training is crucial in real-world 
applications~\cite{garcia2015comprehensive}. 
However, the inclusion of safety constraints in RL is non-trivial.
First, safety and goal objectives are often competitive~\cite{awasthi2022theory}.
Second, the constrained state-space is usually non-convex~\cite{wen2018constrained}.
And third, ensuring safety via iterative action corrections is time-consuming and 
impractical for safety-critical tasks requiring fast responses.

Numerous studies strive to address these challenges, enhancing safety assurances in RL. 
Early works utilized trust-region~\cite{achiam2017constrained} and fixed penalty 
methods~\cite{bohez2019value} to enforce cumulative constraints satisfaction in 
expectation~\cite{liang2018accelerated, tessler2018reward}. 
However, these methods are sensitive to hyperparameters and often result in policies being 
either too aggressive or too conservative~\cite{liu2021policy}. 
Furthermore, these methods only ensure safe behaviors asymptotically upon the completion of
training, resulting in a gap in safety assurance during the training exploration 
process~\cite{bharadhwaj2020conservative}. 

Recognizing the limitations of above methods in achieving immediate safety during the 
training phase, many works seek to employ hierarchical agents to project task-oriented 
action into a prior~\cite{thananjeyan2020safety}, or learned safe 
region~\cite{zhang2023evaluating} to satisfy state-wise constraints. 
However, these methods usually make additional assumptions like white-box system 
dynamics~\cite{cheng2019end} or default safe controller~\cite{koller2018learning}, which is not always available and strict 
feasibility is not guaranteed~\cite{wen2018constrained}. 
Besides, applying step-wise projection to satisfy instantaneous hard constraints is often 
time-consuming, which hampers the optimality of the task objectives~\cite{zhao2023state}.
%

Alternatively, integrating safety into RL via chance or probabilistic constraints, which
involves unfolding predictions to estimate safety probability for future states, 
presents a compelling advantage.
Chance constraints have received significant attention within both safe 
control~\cite{wang2022myopically, jing2022probabilistic} and RL 
communities~\cite{zhao2023probabilistic, bharadhwaj2020conservative, petsagkourakis2020constrained}. 
However, while these methods mark a step forward, they often fall short in efficiently 
enforcing such constraints~\cite{bharadhwaj2020conservative, eysenbach2017leave}
or necessitate the use of an independent safety probability 
estimator~\cite{gangadhar2022adaptive}.
%
%


Motivated by these challenges, we propose \textbf{A}daptive \textbf{C}hance-constrained 
\textbf{S}afeguards (\textbf{\algname}), an efficient model-free safe RL algorithm that 
models safety recovery rate as a surrogate chance constraint to adaptively guarantee 
safety in exploration and after convergence. 
Unlike existing work~\cite{petsagkourakis2020constrained} that approximates the safety 
critic 
through lengthy Monte Carlo sampling, or~\cite{bharadhwaj2020conservative} that 
learns a conservative policy with relaxed upper-bounds 
in conservative Q-learning~\cite{kumar2020conservative}, \algname directly constrains the safety advantage critics, 
which can be interpreted as \textit{safety recovery rate}. 
We show theoretically in \secref{sec:ACS} that this is a sufficient condition to certify 
in-training safety convergence in expectation. 
The introduction of recovery rate mitigates the objective trade-off commonly encountered 
in safe RL~\cite{garcia2015comprehensive} by encouraging agents to explore risky states 
with more confidence, while enforcing strict recovery to the desired safety threshold. 
%
%
We also validate empirically in \secref{sec:exp} that \algname can find a near-optimal 
policy in tasks with stochastic moving obstacles where almost all other state-of-the-art 
(SOTA) algorithms fail. 

To summarize, the contributions of this paper include: 
(1) proposing adaptive chance-constrained safeguards (\algname), an advantage-based 
algorithm mitigating exploration-safety trade-offs with surrogate probabilistic constraints 
that theoretically certifies safety recovery; 
(2) extensive experiments on various simulated safety-critical tasks 
demonstrating that \algname not only achieves superior safety performance (nearly 
zero in-training violation), but also surpasses SOTA methods in cumulative reward and 
time efficiency with a significant increase (23.8\% $\pm$ 10\%);
and (3) two real-world manipulation experiments 
showing that \algname boosts the success rate by~30\% and reduces safety 
violations by~65\%, while requiring fewer iterations compared to existing methods.
%
    
    

\section{Related Work}\label{sec:works}
\subsection{Safe RL}
Safe RL focuses on algorithms that can learn optimal behaviors while ensuring
safety constraints are met during both the training and deployment 
phases~\cite{garcia2015comprehensive, sutton2018reinforcement}.
%
Existing safe RL methods can be generally divided into three categories:
\subsubsection{End-to-end} 
End-to-end agent augments task objective with safety cost and solves unconstrained 
optimization (or its dual problem) directly 
thereafter~\cite{tessler2018reward, liang2018accelerated}. 
%
For example,~\cite{donti2021dc3} augments safety constraint as $L_2$ 
regularization;~\cite{ma2021learn} adopts an additional network 
to approximate the Lagrange multiplier;
and~\cite{he2023autocost} searches for intrinsic cost to achieve zero-violation performance. 
However, the resulting policies are only asymptotically safe and lack 
in-training safety assurance, while the final convergence on safety constraints is not
guaranteed~\cite{chen2021primal}.

\subsubsection{Direct policy optimization (DPO)} 
Instead of augmenting safety cost into the reward function, DPO methods such 
as~\cite{achiam2017constrained} leverage trust-regions to update task policy 
inside the feasible region. 
More specifically,~\cite{wen2018constrained} refines the sampling distribution by solving a 
constrained cross-entropy problem. 
And~\cite{zhang2020first} confines the safe region via a convex approximation 
to the surrogate constraints with first-order Taylor expansion. 
However, these methods are usually inefficient and are prone to being overly conservative.

\subsubsection{Projection-based methods} 
To ensure strict certification in safety, recent work leverages a hierarchical 
safeguard/shield~\cite{alshiekh2018safe} to project unsafe actions into the safe set. 
Projection can be conducted by various approaches, including 
iterative-sampling~\cite{bharadhwaj2020conservative}, 
gradient-descent~\cite{zhang2023evaluating}, 
quadratic-programming (QP)~\cite{chow2018lyapunov}, 
and control-barrier-function~\cite{cheng2019end}. 
More specifically,~\cite{bharadhwaj2020conservative} proposes
an upper-bounded safety critic 
via CQL~\cite{kumar2020conservative} and iteratively collects samples until a 
conservative action that satisfies the safety constraint is found. 
However, iterative sampling is not time-efficient for safety-critical tasks that require 
immediate responses. 
Similarly,~\cite{selim2022safe, ganai2024iterative} conduct black-box reachability 
analysis to iteratively search for entrance to the feasible set.
On the other hand, safety layer method~\cite{dalal2018safe} parameterizes the system 
dynamics and solves the QP problem with the learned dynamics. 
However, we show in \secref{sec:exp} that these methods fail in tasks with complex cost functions. 
Some other methods seek to achieve zero-violation with a hand-crafted energy function, such 
as ISSA~\cite{zhao2021model}, RL-CBF~\cite{wei2019safe}, 
and ShieldNN~\cite{ferlez2020shieldnn}.
However, these methods require prior knowledge of the task, which is intractable for 
general model-free RL case. 
Compared with existing SOTA methods, our proposed method \algname, effectively 
addresses and surpasses them by tackling two major challenges in this field: (1) balancing the trade-off between task optimality 
and safety feasibility; and (2) efficiently conducting projection.
%
\subsection{Chance-Constrained Safe Control}
A tentative approach for better trading-off between objectives is through unrolling 
future predictions, and deriving a surrogate chance-constraint to prevent future safety 
violations~\cite{wang2022myopically}. 
However, the inclusion of chance-constraints in control optimization is non-trivial.
\cite{gangadhar2022adaptive} proposes to linearize the chance-constraint 
into a myopic controller to guarantee long-term safety. 
However, their method requires a refined system model and an extra differentiable safe 
probability estimator. 
In model-free RL, state-wise safety probability depends on the policy 
and can thus be approximated with a critic~\cite{srinivasan2020learning}.
%
Early works~\cite{petsagkourakis2020constrained, wen2018constrained} approximate this 
critic through Monte Carlo sampling, which are lengthy and slow.

To further address the issue,~\cite{bharadhwaj2020conservative} approximates the critic 
via CQL~\cite{kumar2020conservative} and learns a conservative policy via iterative sampling.
Similarly,~\cite{eysenbach2017leave} proposes to learn an ensemble of critics 
and train both a forward task-oriented policy and a reset goal-conditioned policy 
that kicks in when the agent is in an unsafe state. 
Given that resetting is not always necessary and efficient,~\cite{thananjeyan2021recovery} 
proposes to learn a dedicated policy that recovers unsafe states. 
However, these approaches require an additional recovery policy and are prone to being
overly conservative. 
In this paper, we show that using only one advantage-based safeguarded policy can 
achieve comparable recovery capability, while maintaining high efficiency.

\section{Preliminaries and Problem Formulation}\label{sec:prelim}
\subsection{Markov Decision Process with Safety Constraint}\label{subsec:cmdp}
Let $x_k \in \mathcal{X} \subset \mathbb{R}^{n_x}$, $u_k \in \mathcal{U} \subset 
\mathbb{R}^{n_u}$ be the discrete sample of system state and control input of continuous
time $t$ (i.e., $t=k\Delta t$), where $n_x$ and $n_u$ are the dimension of the state space 
$\mathcal{X}$ and control space $\mathcal{U}$, the partially observable system with 
stochastic disturbances can be essentially represented by a probability distribution, 
that is:
\begin{equation}\label{eq:system}
    x_{k+1} = \mathbf{F}(x_k, u_k, \hat{\epsilon}) + w_k \Rightarrow x_{k+1} 
    \sim P(x_{k+1}|x_k, u_k) 
\end{equation}
where $\mathbf{F}$ denotes the system dynamics under parametric uncertainty, and 
$\hat{\epsilon}$, $w_k$ denote the parametric uncertainties and 
the additive disturbance of the system respectively.
%
%
RL policy seeks to maximize rewards in an infinite-horizon Constrained Markov Decision 
Process (CMDP)~\cite{altman2021constrained}, which can be specified by a tuple 
$(\mathcal{X},\mathcal{U}, \gamma, R, P)$, 
where $R: \mathcal{X} \times \mathcal{U} \rightarrow \mathbb{R}$ is the reward 
function, $0\le \gamma \le 1$ is the discount factor and 
$P: \mathcal{X} \times \mathcal{U} \times \mathcal{X} \rightarrow [0, 1]$ is the 
system state transition probability function defined in \eqnref{eq:system}. 
Therefore, the safe RL problem can be formulated as
%
\begin{subequations}
   \begin{align}\arg\,\max_{\pi_\theta} J(\pi) &= \mathbb{E}_{x \sim P, u \sim \pi_\theta}\left[\sum^{\infty}_{k=0}\gamma^k R(x_k, u_k)\right] \label{eqn:problem_statement}    \\
        \mathrm{ s.t. } \hspace{0.2cm} \pi_\theta &\in \Pi_C \label{eqn:pi_theta}\\
        \Pi_C &= \{\pi \in \Pi \mid \forall u_k \sim \pi_\theta, x_k \in \mathcal{S}_C\} \label{eqn:Pi_theta}\\
        \mathcal{S}_C &= \{x \mid \forall i, J_{C_i}(x)\le d_i\ \} \label{eqn:s_c}
    \end{align} 
\end{subequations}
where $\Pi$ denotes the set of all stationary policies, $\Pi_C \subset \Pi$ represents the
set of \textit{feasible} policies that satisfy all safety constraints 
$C = \{C_0, C_1, \dots, C_n\}$.
Accordingly, $J_{C_i}$ denotes the cost measure with respect to one specific safety 
constraint $C_i \in C$, and is evaluated as
$J_{C_i}=\mathbb{E}_{\tau\sim\pi} [\sum_{k=0}^\infty \gamma^k C_i(x_k, u_k, x_{k+1})]$,
where $\tau$ is a trajectory (i.e., $\tau=\{x_0, u_0, x_1, u_1 \cdots\}$) resulted from 
the policy $\pi$.
Finally, $\mathcal{S}_C$ represents the set of \textit{safe} states, where each state
satisfies the $i^\text{th}$ safety constraints $C_i \in C$ through not exceeding its 
corresponding permitted threshold $d_i$.
%
\subsection{Chance-constrained Safety Probability}\label{subsec:cc_safety}
However, enforcing all the states in a trajectory to stay within $\mathcal{S}_C$ as defined 
in \eqnref{eqn:s_c} is impractical in real-world settings. 
Because, first, the environment is stochastic with large uncertainties, which is impossible 
to satisfy \eqnref{eqn:pi_theta} all the time. 
Second, the penalty feedback induced by the final disastrous behavior is often sparse and 
delayed. Thus solely abiding by these constraints will result in myopic, unrecoverable 
policies~\cite{wang2022myopically}.
%
Therefore, we propose to instead consider a chance constraint~\cite{gangadhar2022adaptive} 
by unrolling future predictions and ensuring $x_k \in \mathcal{S}_C$ during an outlook time 
window $\mathcal{T}(k)=\{k, k+1, \dots\}$ with probability $1-\alpha$.
Precisely, we define the safety probability $\Psi$ of a single state $x_k$ as
\begin{equation}\label{eqn:psi_x_k}
        \Psi(x_k) := P\left(\cap_{j \in \mathcal{T}(k)} \,x_j \in \mathcal{S}_C\right) 
        \geq 1-\alpha
\end{equation}
where $\alpha$ represents the tolerance level of the unsafe event. 

To evaluate the constraint in \eqnref{eqn:psi_x_k}, we draw from the Bellman 
equation~\cite{sutton2018reinforcement} and approximate the expected long-term 
chance-constrained safety probability $\Psi$ with a value network $V_C^\pi(x_k)$ 
that is learned through trials and errors. 
The associated theorem (\thmref{thm:safe_probability}) and its proof are detailed 
in \appref{app:value_approx}.
%

While other works may evaluate the state-action value 
$Q_C^\pi(x_k, u_k)$~\cite{bharadhwaj2020conservative, zhang2023evaluating},
in this paper, we follow~\cite{wagener2021safe} and approximate the 
advantage $A_C^\pi(x_k, u_k)$ of control $u_k$ with a critic network, since it could better
adjust with the change of safety probability, as will be shown later in \secref{subsec:recover}.
%

\section{Adaptive Chance-Constrained Safeguards}\label{sec:ACS}
\begin{figure}[t!]
    \centering
    \includegraphics[width=0.39\textwidth]{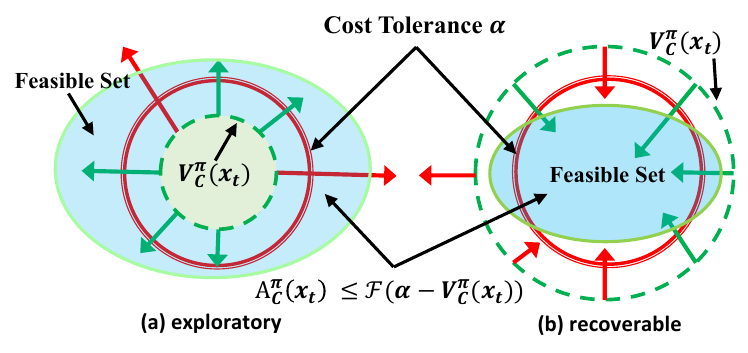}
    \caption{
    The proposed adaptive chance constraint. 
    The green-dashed and red circle denotes the current safety cost and unified cost tolerance level respectively. The blue oval denotes the adaptive chance-constrained feasible set.
    Green/red arrows denote feasible/infeasible actions. 
    When current cost $V^\pi_C(x_k)$ is within tolerance, the agent is encouraged to 
    explore more risky states. 
    Otherwise, the next action is constrained in a more conservative set which 
    satisfies \eqnref{eqn:sufficient_condition}, so that long-term safety recovery is certified.
    %
    }
    \vspace{-0.2in}
    \label{fig:algorithm}
\end{figure}
\begin{figure*}[t!]
    \centering
    \includegraphics[width=0.8\textwidth]{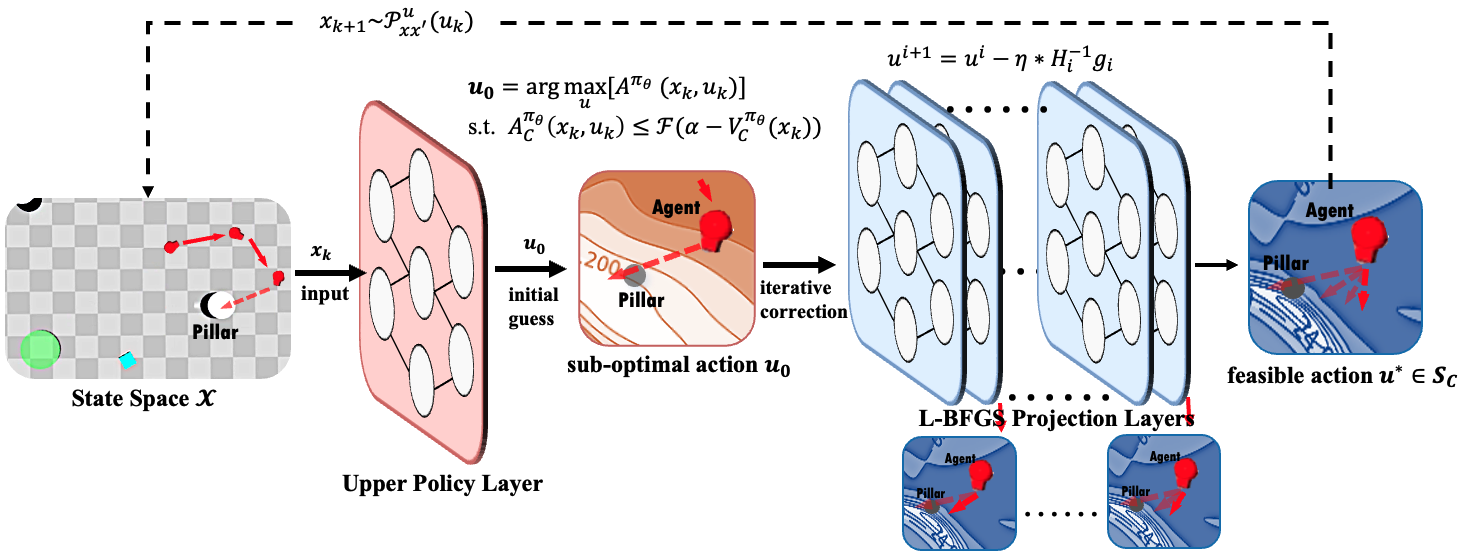}
    \caption{
    The hierarchical framework of the proposed \algname. 
    A Lagrangian-based upper policy layer first generates a near-optimal initial action 
    $u_0$ by solving~\eqnref{eqn:lagrangian}, then the quasi-newton-based projection layers 
    iteratively correct it into the safe set that satisfies \eqnref{eqn:sufficient_condition} 
    via efficient back-propagation \eqnref{eqn:bfgs_update}, enabling ACS to balance task 
    objective and certified safety by constraining actions in an adaptive feasible set while 
    ensuring immediate response.
    }
    \vspace{-0.15in}
    \label{fig:framework}
\end{figure*}
In this section, we illustrate the proposed advantage-based chance-constraint safeguard 
(\algname), which derives a relaxed constraint on RL exploration to achieve better 
task-oriented performance while theoretically guaranteeing recovery to the safe region.
The schematic of \algname is shown in \figref{fig:algorithm}.
%
%
\subsection{Learning to Recover}\label{subsec:recover}
As mentioned earlier in \secref{subsec:cc_safety}, strictly guaranteeing \eqnref{eqn:s_c} 
is neither practical nor necessary for real-world settings. 
To help mitigate the difficulty in specifying a rational safety boundary that 
balances between task and safety objectives, we relax the safety chance 
constraint following~\cite{wang2022myopically} and propose a plug-and-play sufficient 
condition of the safe recovery in ACS for generic RL controllers.
First, we define a discrete-time generator $G$ for any $x_k \in \mathcal{X} \subset 
\mathbb{R}^{n_x}$, which can be considered as a stochastic process~\cite{ethier2009markov}
taking form:
\begin{equation}\label{eqn:discrete_generator_g}
G\Psi(x_k) = \mathbb{E}\left[\Psi(x_{k+1}) \mid x_k, \pi_{\theta}(x_k)\right]-\Psi(x_k)
\end{equation}
where $\pi_{\theta}(x_k)$ is also conditioned on state $x_k$. 
Essentially, \eqnref{eqn:discrete_generator_g} captures the expected improvement 
or degeneration of safety probability $\Psi$ as the stochastic process proceeds.
%
%
However, rather than imposing constraints directly on $\Psi(x_k)$ like 
in~\cite{bharadhwaj2020conservative, zhang2023evaluating},
we propose to apply the chance constraint on the generator output 
(recovery rate of $\Psi(x_k)$), that is:
\begin{equation}\label{eqn:g_psi_x_k}
G\Psi(x_k) \geq -\mathcal{F}\left(\Psi(x_k) - (1-\alpha)\right)
\end{equation}
where $\mathcal{F}(p)$ can be any concave function that is upper-bounded by $p$. 
Consequently, \eqnref{eqn:g_psi_x_k} defines a lower bound for its recovery rate, such that
when $\Psi(x_k) \leq (1-\alpha)$ (i.e. the safety assurance is compromised), the 
controller is enforced to recover to safety at rate $G\Psi(x_k)$. 
Otherwise, the controller is free to explore to achieve better task-oriented performance,
which balances the trade-off between safety and optimality.
%

More specifically, the following chance constraint is proposed in \algname framework to certify 
in-training safety, which is specified in \thmref{thm:constraint}.
The detailed proof of the theorem is illustrated in \appref{app:dynamic_policy}.

\begin{theorem}\label{thm:constraint}
Let $A^{\pi_\theta}_C(x_k, u_k)$ denote the advantage function of control $u_k$ at $x_k$, 
the sufficient condition that can ensure asymptotic safety satisfaction both in training 
and after convergence is
\begin{subequations}\begin{align}\label{eqn:sufficient_condition}
A_{C_i}^{\pi_\theta}(x_k, u_k)  &\leq \mathcal{F}_i(\alpha_i - V_{C_i}^{\pi_\theta}(x_k)) \hspace{0.1cm}  \\
\mathrm{ s.t. } \hspace{0.2cm} H(\mathcal{F}_i(q)) &\preceq 0 \text{ and } \mathcal{F}_i(q) \leq q
\end{align}
\end{subequations}
where $H(\mathcal{F}_i(q))$ is the Hessian of $\mathcal{F}_i(q)$, and $C_i$, $\alpha_i$ 
denotes the cost function and the tolerance level of the $i^\text{th}$ safety constraint, 
respectively.
\end{theorem}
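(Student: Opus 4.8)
The plan is to establish \eqnref{eqn:sufficient_condition} as a sufficient condition in three stages: (i) translate the advantage bound into a lower bound on the discrete generator $G\Psi_i$ of the safety‑probability process; (ii) identify that bound with the relaxed chance constraint \eqnref{eqn:g_psi_x_k}; and (iii) show that \eqnref{eqn:g_psi_x_k}, together with the structural assumptions $H(\mathcal{F}_i)\preceq 0$ and $\mathcal{F}_i(q)\le q$, drives the expected cost $\mathbb{E}[V_{C_i}^{\pi_\theta}(x_k)]$ into the tolerance band $[0,\alpha_i]$ and keeps it there, i.e. certifies forward invariance of $\mathcal{S}_C$ in expectation — both before and after policy convergence.

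For stage (i) I would start from the Bellman identity for the cost critic, $A_{C_i}^{\pi_\theta}(x_k,u_k)=Q_{C_i}^{\pi_\theta}(x_k,u_k)-V_{C_i}^{\pi_\theta}(x_k)=\mathbb{E}_{x_{k+1}\sim P(\cdot\mid x_k,u_k)}\!\left[C_i+\gamma V_{C_i}^{\pi_\theta}(x_{k+1})\right]-V_{C_i}^{\pi_\theta}(x_k)$, and invoke \thmref{thm:safe_probability} from \appref{app:value_approx}, which identifies $V_{C_i}^{\pi_\theta}(x_k)$ with the probability of violating $C_i$ within the outlook window $\mathcal{T}(k)$, i.e. $V_{C_i}^{\pi_\theta}=1-\Psi_i$, so that $\{\Psi_i(x_k)\ge 1-\alpha_i\}=\{V_{C_i}^{\pi_\theta}(x_k)\le\alpha_i\}$. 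Substituting this correspondence into \eqnref{eqn:discrete_generator_g} gives $G\Psi_i(x_k)=-\big(\mathbb{E}[V_{C_i}^{\pi_\theta}(x_{k+1})\mid x_k,\pi_\theta]-V_{C_i}^{\pi_\theta}(x_k)\big)$, which — once the instantaneous cost term is absorbed into the reach‑avoid value of \appref{app:value_approx} and $\gamma\to1$ — equals $-A_{C_i}^{\pi_\theta}(x_k,\pi_\theta(x_k))$. Hence \eqnref{eqn:sufficient_condition} rearranges to $-G\Psi_i(x_k)\le\mathcal{F}_i(\alpha_i-V_{C_i}^{\pi_\theta}(x_k))=\mathcal{F}_i\big(\Psi_i(x_k)-(1-\alpha_i)\big)$, which is precisely \eqnref{eqn:g_psi_x_k}; stage (ii) is then immediate, and the constraints $\mathcal{F}_i(q)\le q$ and $H(\mathcal{F}_i)\preceq 0$ carry over unchanged.

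For stage (iii), write $e_k:=V_{C_i}^{\pi_\theta}(x_k)-\alpha_i$, so $x_k$ is momentarily safe w.r.t.\ $C_i$ iff $e_k\le 0$. The generator bound rearranges to the one‑step recursion $\mathbb{E}\!\left[e_{k+1}\mid x_k,\pi_\theta\right]\le e_k+\mathcal{F}_i(-e_k)$. Taking a further expectation over the trajectory and applying Jensen's inequality, legitimate because $H(\mathcal{F}_i)\preceq 0$, yields $\mathbb{E}_\tau[e_{k+1}]\le\mathbb{E}_\tau[e_k]+\mathcal{F}_i\!\left(-\mathbb{E}_\tau[e_k]\right)$; finally $\mathcal{F}_i(q)\le q$ at $q=-\mathbb{E}_\tau[e_k]$ collapses the right‑hand side to $\le 0$, so $\mathbb{E}_\tau[e_{k+1}]\le 0$ regardless of the sign of $\mathbb{E}_\tau[e_k]$. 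This proves forward invariance in expectation; since the projection layer of \algname enforces \eqnref{eqn:sufficient_condition} at every environment step, the recursion holds even while the policy is still being learned, giving the claimed in‑training guarantee, and a supermartingale‑convergence argument applied to $\max\{e_k,0\}$ upgrades it to $\mathbb{E}_\tau[\,\mathrm{dist}(x_k,\mathcal{S}_C)\,]\to 0$.

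The main obstacle is stage (i): rigorously reconciling the advantage $A_{C_i}^{\pi_\theta}$ — which carries the running cost $C_i(x_k,u_k,x_{k+1})$ and the discount $\gamma$ — with the undiscounted, cost‑free generator $G\Psi_i$ of the safety‑probability process, and controlling the function‑approximation error of the critic. This is exactly where the precise statement of \thmref{thm:safe_probability} must be used, e.g.\ by adopting the reach‑avoid (terminal‑cost) convention so the running‑cost term vanishes and $\gamma$ may be taken to $1$. By contrast, the stochastic‑stability step in (iii) is comparatively routine once concavity and $\mathcal{F}_i(q)\le q$ are in hand, since together they make $\max\{e_k,0\}$ a supermartingale.
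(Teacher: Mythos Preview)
Your stationary-policy argument (stages (i)--(iii)) is sound and in fact more complete than the paper's. The paper reaches only the conditional one-step bound $\mathbb{E}[V_C(x_{k+1})\mid x_k,\pi_\theta(x_k)]\le\alpha$ by applying $\mathcal{F}(q)\le q$ directly---the Hessian condition is never actually invoked in that step, despite the ``by convexity'' remark---whereas your Jensen-from-concavity step followed by the supermartingale argument yields unconditional forward invariance and makes clear why \emph{both} structural assumptions on $\mathcal{F}_i$ are required.

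The divergence, and the gap, is in the in-training part. The paper does \emph{not} argue that the projection layer enforcing \eqnref{eqn:sufficient_condition} at every environment step suffices during learning. That argument is incomplete: $V_C^{\pi_\theta}$ and $A_C^{\pi_\theta}$ are policy-dependent, so satisfying \eqnref{eqn:sufficient_condition} with respect to $\pi_{\theta_k}$ at step $k$ controls $\mathbb{E}[V_C^{\pi_{\theta_k}}(x_{k+1})\mid x_k]$ but says nothing about $V_C^{\pi_{\theta_{k+1}}}(x_{k+1})$; your recursion on $e_k$ silently changes the policy index between consecutive steps. What the paper actually does for the dynamic case is invoke the CPO/TRPO trust-region inequality
\[
V_C^{\pi_\theta}-V_C^{\pi_{\theta_{\mathrm{old}}}}\;\le\;\frac{1}{1-\gamma}\,\mathbb{E}_{x\sim\rho_{\theta_{\mathrm{old}}},\,u\sim\pi_\theta}\bigl[A_C^{\pi_{\theta_{\mathrm{old}}}}\bigr]+\beta\,D_{\mathrm{TV}},\qquad \beta=\frac{2\gamma\max\bigl|\mathbb{E}_\tau A_C^{\pi_{\theta_{\mathrm{old}}}}\bigr|}{1-\gamma},
\]
and then extracts an upper bound on the admissible policy shift $D_{\mathrm{TV}}(\pi_\theta\|\pi_{\theta_{\mathrm{old}}})$ so that the certificate already established for $\pi_{\theta_{\mathrm{old}}}$ transfers to $\pi_\theta$. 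This trust-region step is the missing ingredient in your proposal; without it, the claim that the recursion ``holds even while the policy is still being learned'' is unsupported.
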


To evaluate the constraint \eqnref{eqn:sufficient_condition} in implementation, we 
construct a fully-connected multi-layer network for both the value and advantage function 
separately based 
on~\cite{stable-baselines3}\footnote{\url{https://github.com/DLR-RM/stable-baselines3}}.
These two networks are updated iteratively together with the task policy in RL exploration through trials-and-errors.

\subsection{Hierarchical Safeguarded Controller}\label{subsec:hierarchical_controller}
In practice, the actual implementation of \algname is illustrated in \figref{fig:framework}. 
To strictly enforce the safety chance constraint, we evaluate the proposed control action 
at every time step and project it into the safe action set~\cite{zhao2023state}. 
Similar to~\cite{donti2021dc3, zhang2023evaluating}, we adopt a hierarchical 
architecture in which the upper policy first solves for a sub-optimal action and iteratively 
corrects it to satisfy the chance constraint in \eqnref{eqn:sufficient_condition}.
%
Specifically, we employ the limited-memory Broyden-Fletcher-Goldfarb-Shanno (L-BFGS) 
method~\cite{liu1989limited} to enforce the feasibility of actions provided by the 
safety-aware policy approximator while preserving time efficiency. 
Note that L-BFGS is theoretically promised to converge much faster to the safety region 
starting from a sub-optimal solution embedded in a locally-convex 
space~\cite{liu1989limited}, as compared to other gradient-descent 
methods~\cite{zhang2023evaluating}.
%
Hence the implementation framework of \algname is consisted of the following 
two sub-modules.

\bfsection{Sub-optimal policy layer.}
For the upper policy layer, we follow~\cite{tessler2018reward} and train a policy optimizer 
to solve task objective \eqnref{eqn:problem_statement} with augmented penalty of the 
chance constraint \eqnref{eqn:sufficient_condition} weighted by a Lagrangian 
multiplier $\lambda$:
\begin{small}
\begin{equation}\label{eqn:lagrangian}
\max_\theta \min_\lambda \mathcal{L}(\pi_\theta, \gamma) 
= \mathbb{E}_{\tau \sim \pi_\theta} \left[A^{\pi_\theta} -\lambda(A_C^{\pi_\theta}- \mathcal{F}(\alpha - V_C^{\pi_\theta}))\right]
\end{equation}
\end{small}

Note that the goal here is to find a safety-aware policy that can produce a sub-optimal 
initial guess $u_0$ with respect to both task objective and safety constraint. 
While extensive existing works solve various forms of 
\eqnref{eqn:lagrangian}~\cite{donti2021dc3,zhang2023evaluating,ma2021learn}, we focus on 
the sub-optimality of the initial guess rather than strictly solving the 
cumulative-constrained MDP. 
Thus, any model-free safety-aware RL algorithm, such 
as~\cite{achiam2017constrained, bharadhwaj2020conservative}, can serve as the policy solver 
of our hierarchical framework. 
We show empirically in \secref{sec:exp} that simply optimizing \eqnref{eqn:lagrangian} is 
efficient enough to find a near-optimal solution via ACS.
\begin{figure*}[t]
    \centering
    \begin{minipage}[t]{0.19\textwidth}
        \centering
        \subfigure[Ant-Run: a quadrupedal robot seeks to run within a velocity limit.]{
            \includegraphics[width=0.86\textwidth]{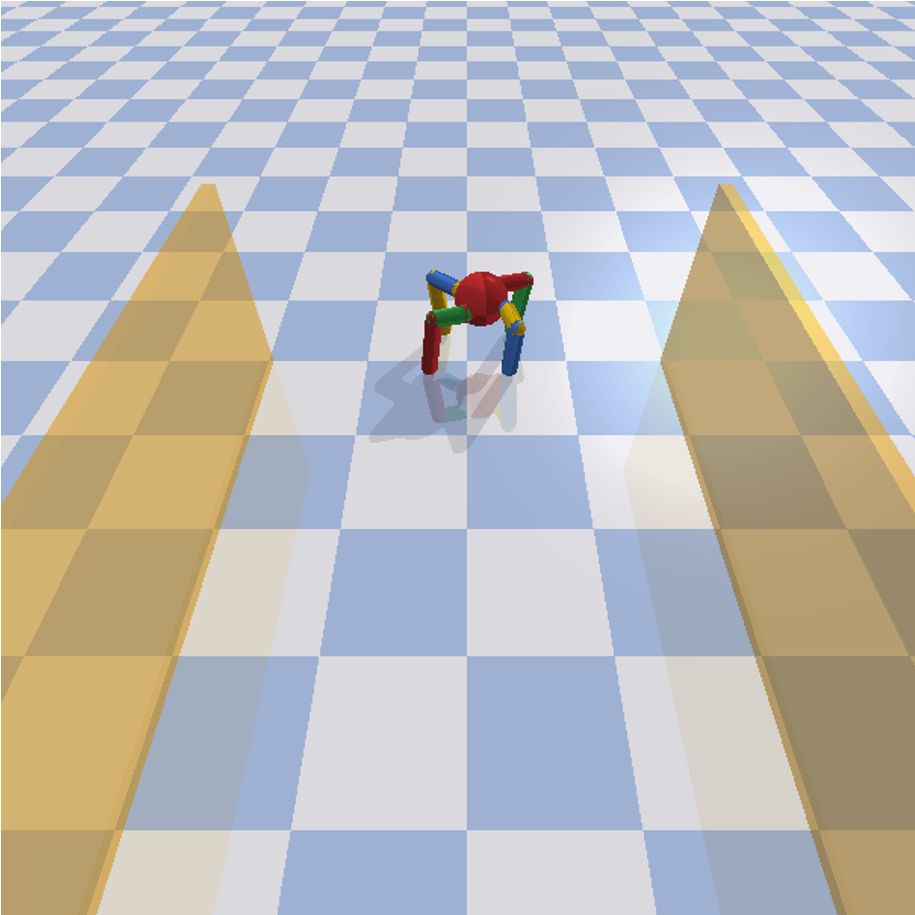}
        }
    \end{minipage}
    \hfill
    \begin{minipage}[t]{0.19\textwidth}
        \centering
        \subfigure[Kuka-Reach: a 7-DOF manipulator Kuka navigates to the button collision-free.]{
            \includegraphics[width=0.86\textwidth]{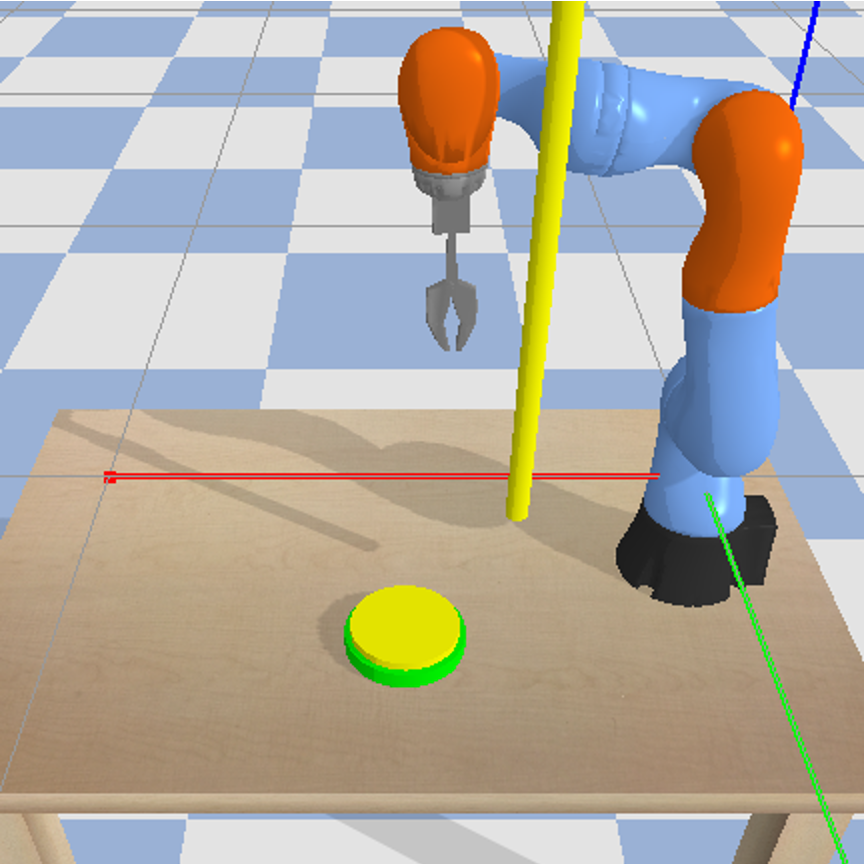}
        }
    \end{minipage}
    \hfill
    \begin{minipage}[t]{0.19\textwidth}
    \centering
    \subfigure[Kuka-Pick: Kuka picks up a fruit while not bumping into a moving cylinder.]{
        \includegraphics[width=0.86\textwidth]{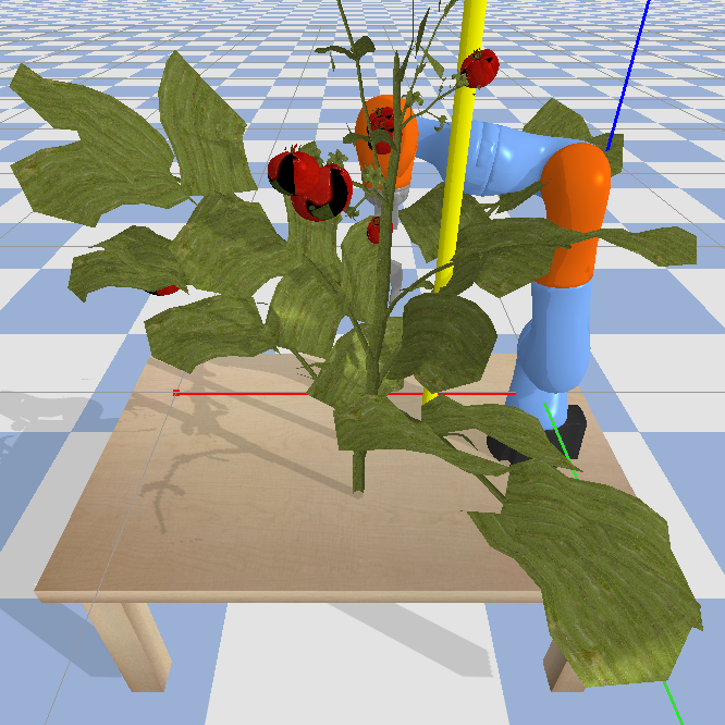}
    }
    \end{minipage}
    \hfill
    \begin{minipage}[t]{0.19\textwidth}
    \centering
    \subfigure[InMoov-Stretch: a humanoid InMoov reaches a fruit with a natural posture.]{
        \includegraphics[width=0.86\textwidth]{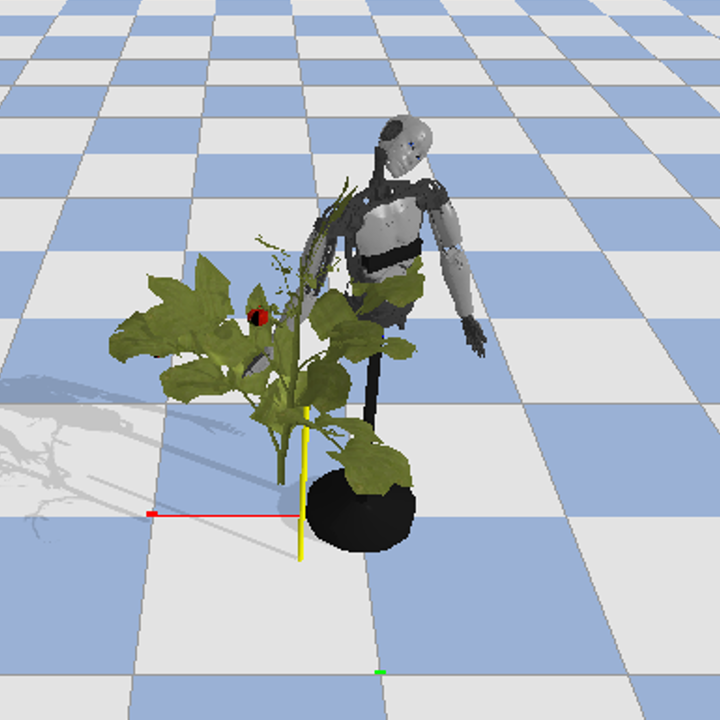}
    }
    \end{minipage}
    \hfill
    \begin{minipage}[t]{0.19\textwidth}
    \centering
    \subfigure[An ill-formed morphology to reach the fruit while violating safety constraint.]{
        \includegraphics[width=0.86\textwidth]{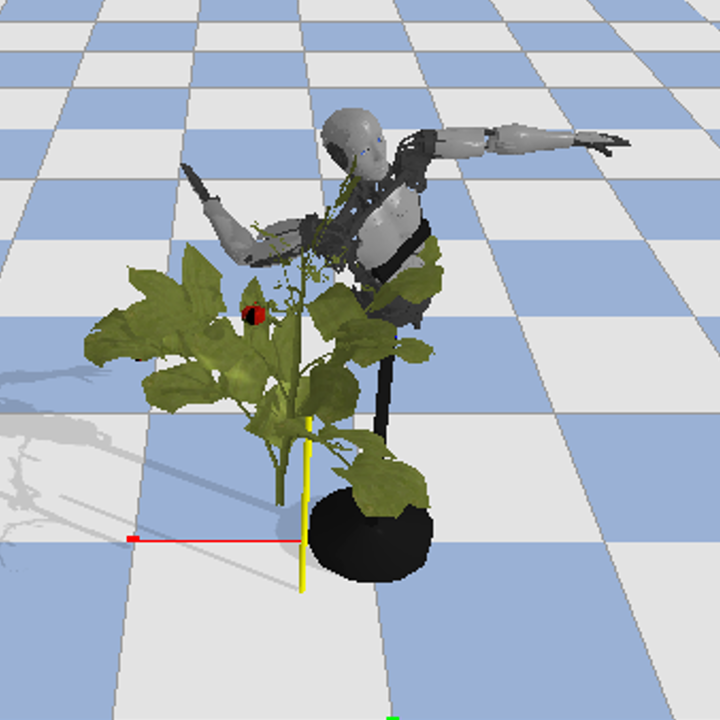}
    }
    \end{minipage}

    \caption{
    (a)-(d): Four simulated safe-critical tasks where we assess five safe RL algorithms;
    (e): An illustration of safety constraint violation.
    }
    \label{fig:sim_env}
\end{figure*}
\begin{figure*}[ht]
    \centering
    \begin{minipage}{0.4\textwidth}
    \centering
    \subfigure[Initial and end Kuka arm positions reaching the target while avoiding the cylinder obstacle in-between.]{
        \includegraphics[width=0.49\textwidth]{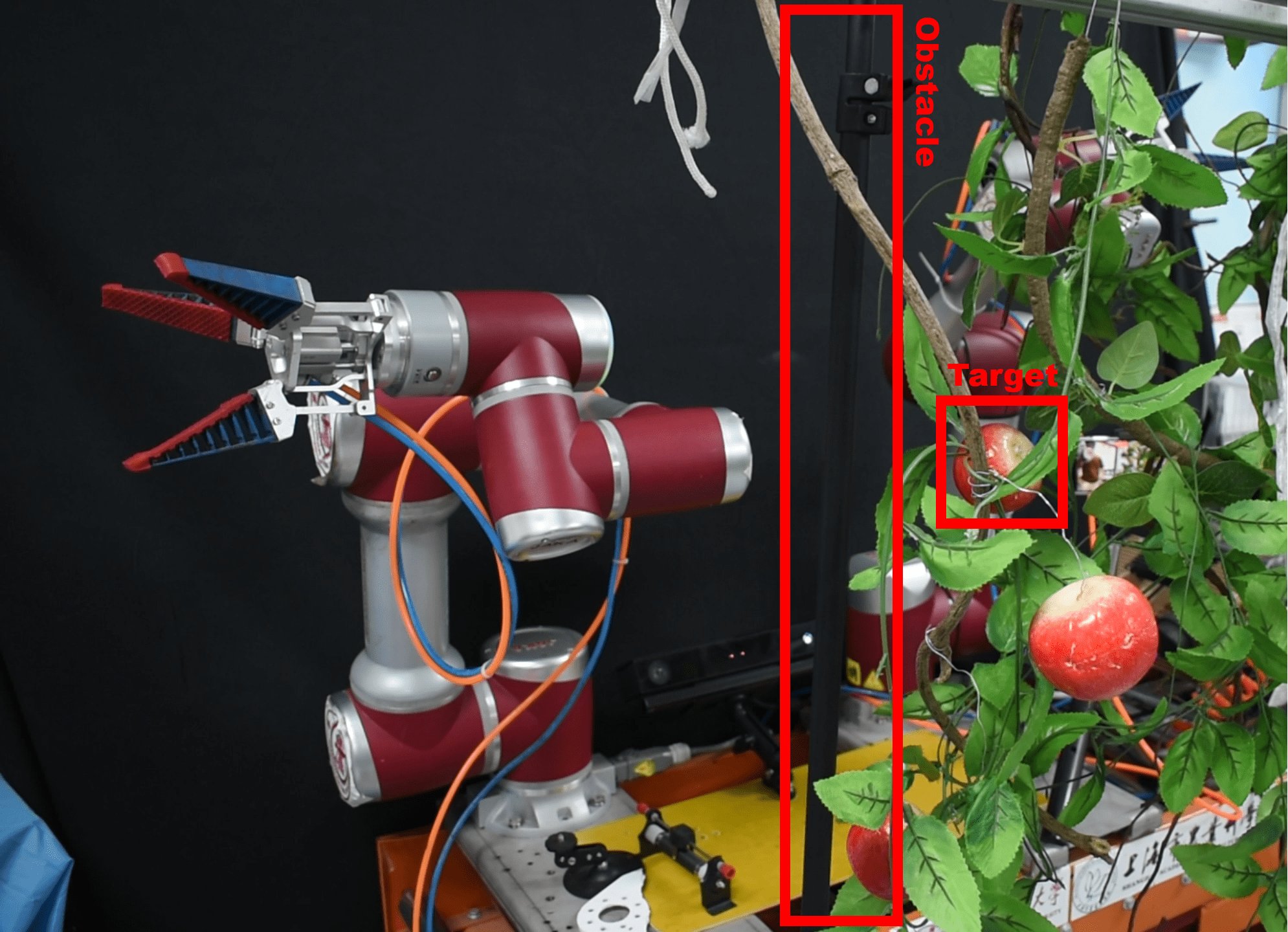}
        \hspace{0.2in}
        \includegraphics[width=0.49\textwidth]{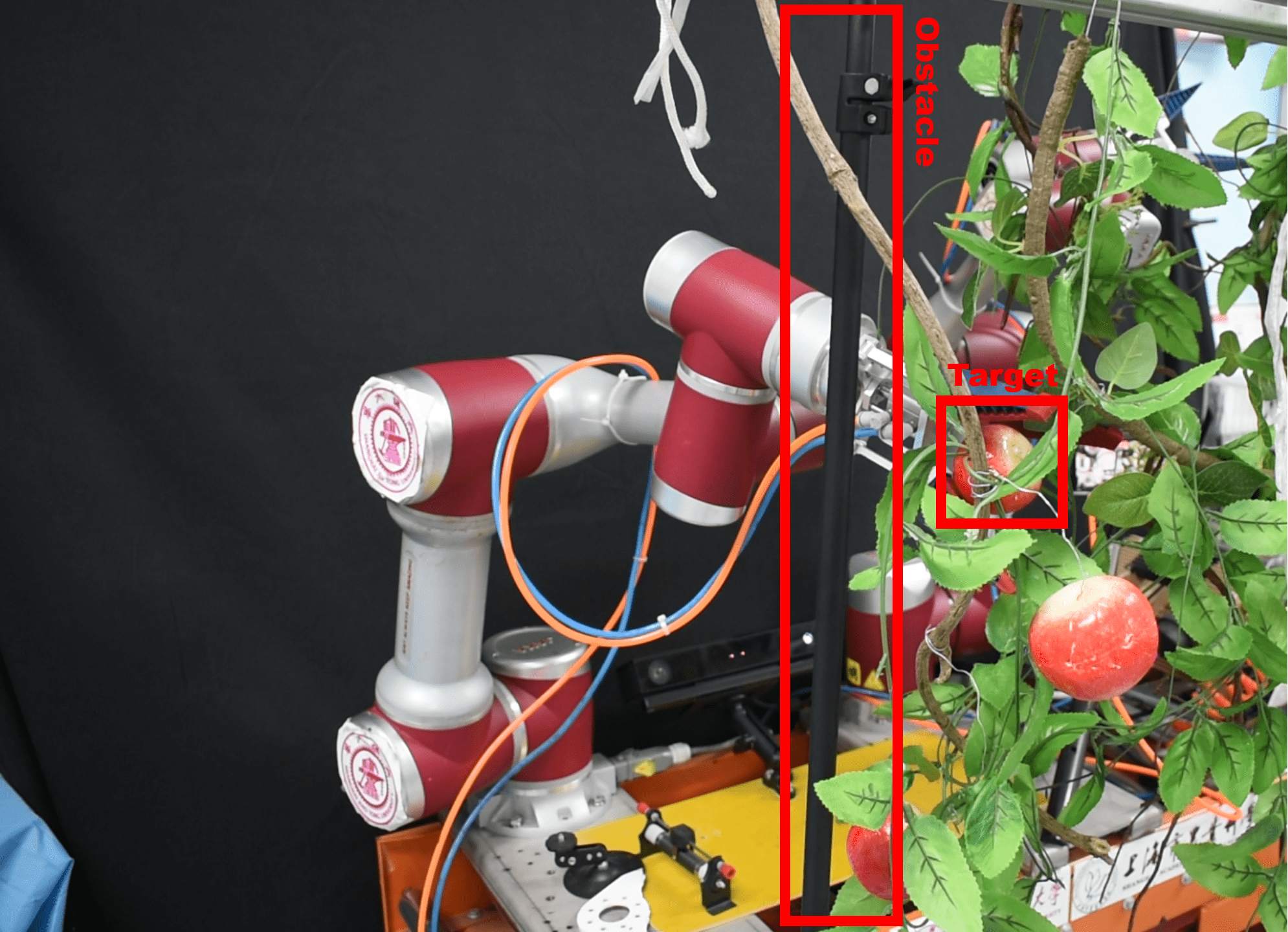}
    }
    \end{minipage}
    \hspace{0.5in}
    \begin{minipage}{0.4\textwidth}
    \centering
    \subfigure[Initial and end InMoov arm postures searching for a natural arm stretch trajectory to reach the target.]{
        \includegraphics[width=0.49\textwidth]{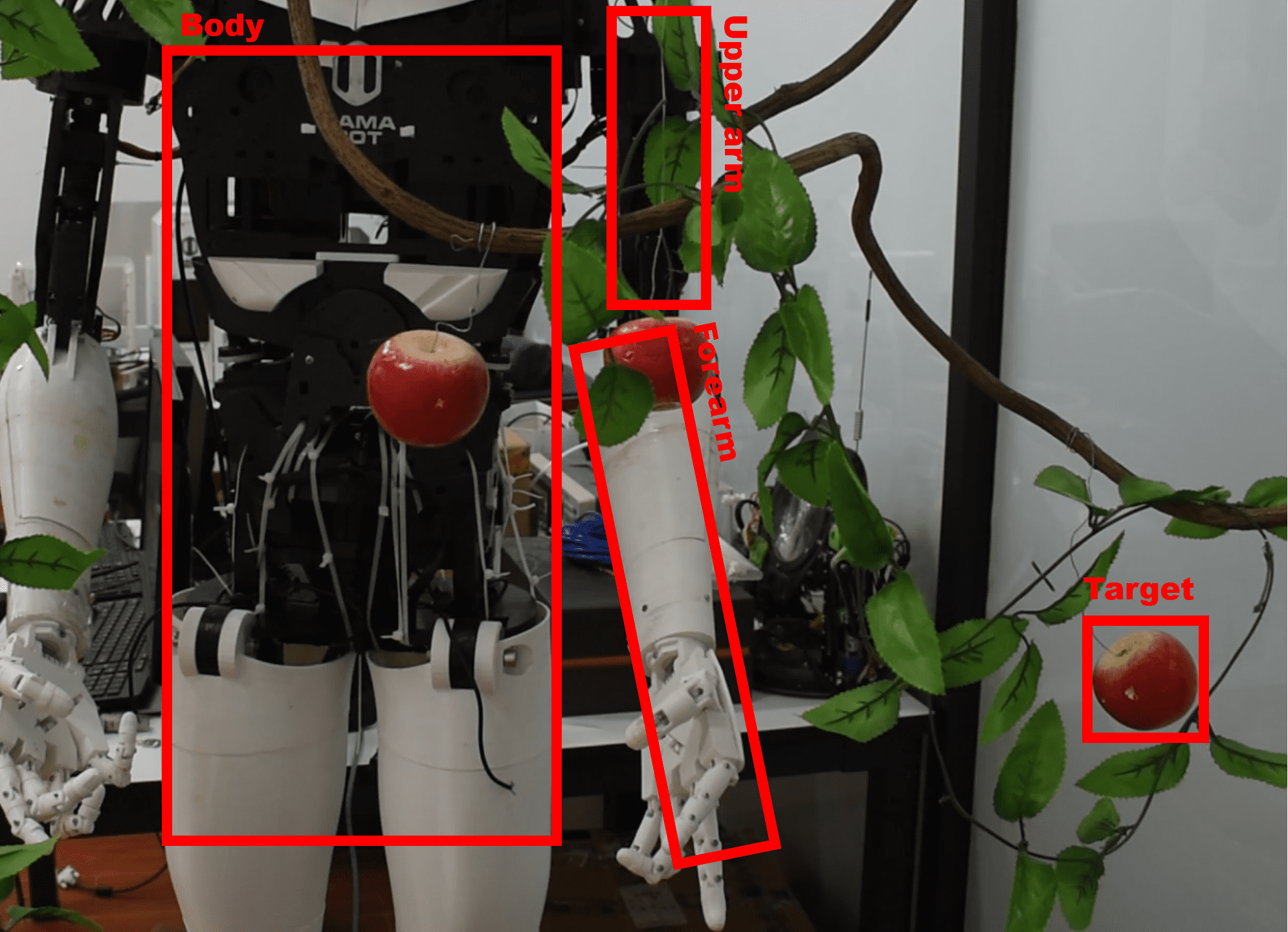}
        \hspace{0.2in}
        \includegraphics[width=0.49\textwidth]{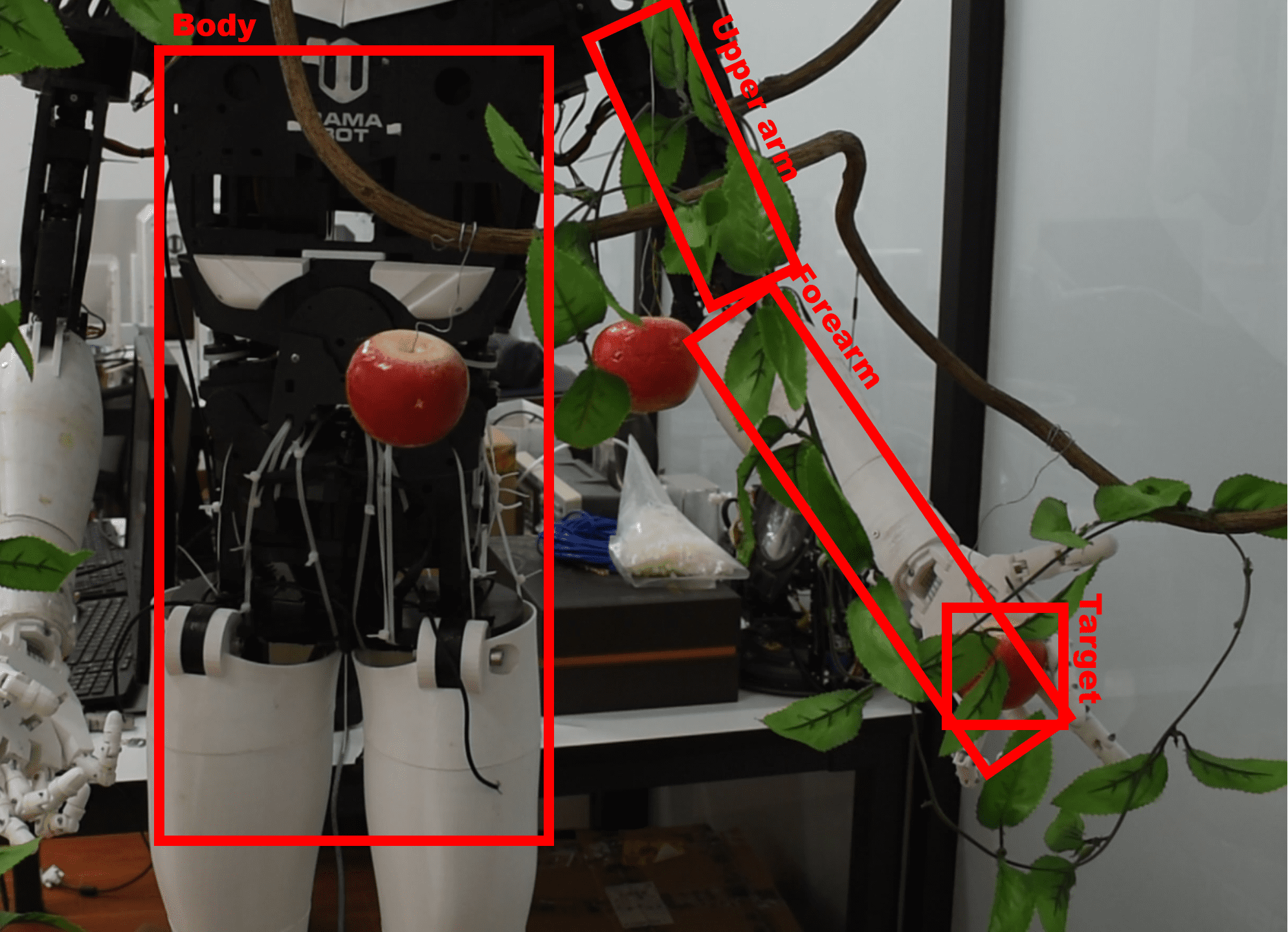}
    }
    \end{minipage}
    \hspace{0.35in}
    \caption{The initial and end pose of the robots in real-world \textit{Kuka-Pick} and
    \textit{InMoov-Stretch}. 
    }
    \label{fig:jaka_training}
\end{figure*}
\begin{figure*}[t]
    \centering
    \hspace{-0.1in}
    \begin{minipage}{0.2375\textwidth}
    \centering
    \subfigure[\textit{Ant-Run}]{
        \includegraphics[width=\textwidth]{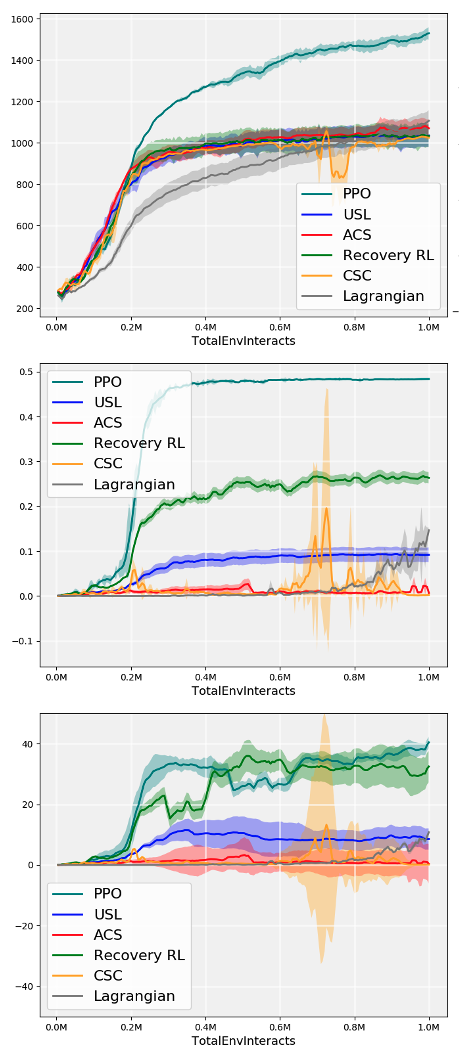}
    }
    \end{minipage}
    \hfill
    \begin{minipage}{0.230\textwidth}
    \centering
    \subfigure[\textit{Kuka-Reach}]{
        \includegraphics[width=\textwidth]{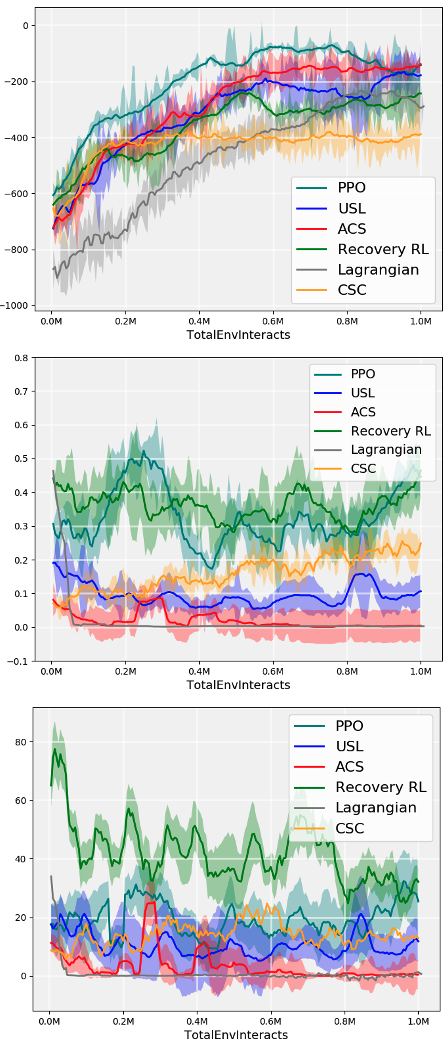}
    }
    \end{minipage}
    \hfill
    \begin{minipage}{0.232\textwidth}
    \centering
    \subfigure[\textit{Kuka-Pick}]{
        \includegraphics[width=\textwidth]{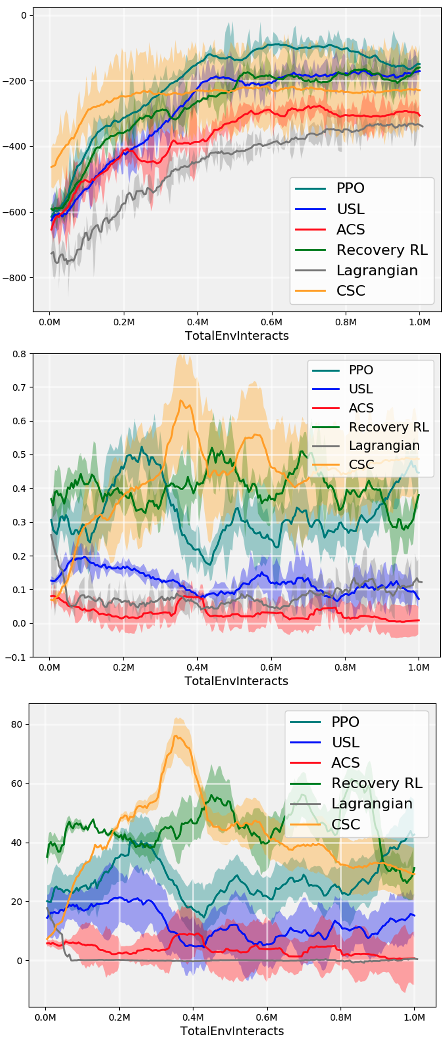}
    }
    \end{minipage}
    \hfill
    \begin{minipage}{0.23\textwidth}
    \centering
    \subfigure[\textit{InMoov-Stretch}]{
        \includegraphics[width=\textwidth]{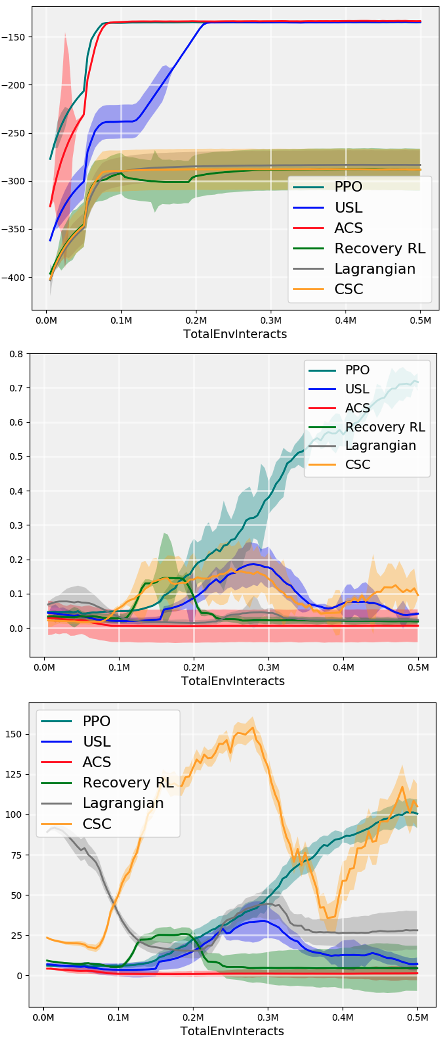}
    }
    \end{minipage}
    \hspace{0.1in}
    \caption{
    In-training curves of episodic return $J_r$ (top row), total cost rate $J_C$ (middle 
    row), and temporal safety cost rate $J_{TC}$ (bottom row) w.r.t. the number of 
    interactions of different algorithms on four safety-critical simulation tasks. 
    %
    }
    \label{fig:learning_curve}
    \vspace{-0.2in}
\end{figure*}

\bfsection{Fast ACS projection.}
The initial guess produced by the upper policy layer does not strictly satisfy 
\eqnref{eqn:sufficient_condition}, given that \eqnref{eqn:lagrangian} optimizes cumulative 
costs and $\lambda$ is hard to tune. 
Therefore, to strictly satisfy constraint in \eqnref{eqn:sufficient_condition}, we propose 
to employ L-BFGS~\cite{liu1989limited}, an efficient 
Quasi-Newton method to \textit{iteratively} correct $u_0$. 
L-BFGS is a memory-efficient method that updates control action by: 
\begin{equation}\label{eqn:bfgs_update}
    u^{k+1} = u^k - \eta * H_k^{-1} g_k
\end{equation}
where $\eta$ is the learning rate, $g_k = \frac{\delta}{\delta u^k}[A_C^{\pi_\theta}- 
\mathcal{F}(\alpha - V_C^{\pi_\theta})]$ is the gradient vector, and $H_k^{-1}$ is the 
approximation of the inversed Hessian matrix. 
BFGS finds a better projection axis and step-length through approximating an additional 
second-order Hessian inverse.
We show in \secref{sec:exp} that \algname can recover safety in few steps, even when 
against an adversarial policy.

We claim L-BFGS to be a better projection strategy in ACS for two reasons: 
1) safety-critical tasks usually require immediate response, and BFGS methods exhibit faster 
convergence rate by trading space for time; 
2) the optimal solution is embedded in the locally-convex sub-optimal region found by 
\eqnref{eqn:lagrangian}~\cite{donti2021dc3}, making it possible to correct the action in 
one step. 
In addition, it is insensitive to $\eta$~\cite{liu1989limited}, which differentiates from 
existing work~\cite{zhang2023evaluating} requiring tedious hyperparameters (e.g., learning 
rate) tuning. 
\begin{table*}[ht]
\centering
\caption{
Mean performance of 20 episodes at convergence on four safety-critical simulation tasks.
}
\vspace{-0.1in}
\setlength{\tabcolsep}{2pt}
\renewcommand{\arraystretch}{0}
\resizebox{0.9\linewidth}{!}{
\begin{tabular}{l | l | l l l l l l l}
\toprule
\multicolumn{2}{l}{Task}                       
&Safety Layer &PPO &Recovery RL &CSC &USL &PPO-Lagrangian &\algname (Ours) \\
\midrule
\multirow{3}{*}{\begin{tabular}[c]{@{}l@{}}\textit{Ant-run}\end{tabular}}  
&$J_r\uparrow$  
&$933.2_{\pm 12.98}$ &$\textbf{1512.7}_{\pm 32.10}$ &$1028.1_{\pm 87.05}$ &$1024.7_{\pm 11.68}$ 
&$981.7_{\pm 26.96}$ &$990.6_{\pm 81.58}$ &$1103.7_{\pm 50.10}$ \\
&$J_C\downarrow$
&$0.087_{\pm 0.01}$ &$0.48_{\pm 0.001}$ &$0.26_{\pm 0.00}$ & $\textbf{0.002}_{\pm 0.00}$ 
&$0.09_{\pm 0.01}$  &$0.11_{\pm 0.04}$ &$0.016_{\pm 0.00}$  \\
&$J_{TC}\downarrow$
&1.3e-3 &9e-4 &1e-3 &9.6e-3 &9e-3 &\textbf{8e-4} &6e-3 \\
\midrule
\multirow{3}{*}{\begin{tabular}[c]{@{}l@{}}\textit{Kuka-Reach}\end{tabular}} 
&$J_r\uparrow$
&-$256.9_{\pm 116.8}$ &-$141.8_{\pm 80.7}$ &-$243.1_{\pm 145.10}$ &-$388.3_{\pm 114.5}$ 
&-$178.7_{\pm 142.4}$ &-$289.1_{\pm 106.2}$  &$\textbf{-138.4}_{\pm 60.89}$   \\
&$J_C\downarrow$ 
&$0.24_{\pm 0.03}$  &$0.46_{\pm 0.06}$ &$0.44_{\pm 0.15}$ &$0.22_{\pm 0.05}$ 
&$0.05_{\pm 0.02}$  &$0.08_{\pm 0.001}$ &$\textbf{0.0032}_{\pm 0.04}$\\
&$J_{TC}\downarrow$   
&5.5e-3 &\textbf{1e-3} &6e-3 &5.4e-3 &1.4e-2 &\textbf{1e-3} &1e-2   \\
\midrule
\multirow{3}{*}{\begin{tabular}[c]{@{}l@{}}\textit{Kuka-Pick}\end{tabular}} 
&$J_r\uparrow$
&-$183.5_{\pm 51.6}$ &$\textbf{-149.7}_{\pm 81.2}$ &-$159.6_{\pm 45.4}$ &-$229.0_{\pm 247.3}$ 
&-$170.6_{\pm 66.8}$ &-$339.7_{\pm 64.0}$ &-$306.1_{\pm 57.9}$    \\
&$J_C\downarrow$ 
&$0.29_{\pm 0.03}$ &$0.47_{\pm 0.06}$ &$0.34_{\pm 0.05}$ &$0.49_{\pm 0.12}$ 
&$0.08_{\pm 0.03}$ &$0.13_{\pm 0.08}$ &$\textbf{0.007}_{\pm 0.08}$ \\
&$J_{TC}\downarrow$
&8.9e-3  &1e-3 &9.3e-3 &6e-3   &1.2e-3  &\textbf{9e-4} &9e-3 \\
\midrule
\multirow{3}{*}{\begin{tabular}[c]{@{}l@{}}\textit{InMoov-Stretch}\end{tabular}}
&$J_r\uparrow$  
&-$288.2_{\pm 44.7}$ &-$134.9_{\pm 0.2}$ &-$288.3_{\pm 43.5}$ &$288.2_{\pm 42.8}$ 
&-$134.6_{\pm 0.23}$ &-$283.3_{\pm 30.3}$  &$\textbf{-133.7}_{\pm 1.1}$ \\
&$J_C\downarrow$
&$0.08_{\pm 0.04}$ &$0.04_{\pm 0.01}$ &$0.02_{\pm 0.009}$ &$0.11_{\pm 0.05}$ 
&$0.04_{\pm 0.01}$ &$0.03_{\pm 0.009}$ &$\textbf{0.007}_{\pm 0.05}$ \\
&$J_{TC}\downarrow$
&8e-3  &\textbf{4.4e-3} &1.8e-2  &3.2e-2  &2.8e-2 &4.5e-3  &2e-2 \\
\bottomrule
\end{tabular}}
\label{tab:mean_performance}
\vspace{-0.1in}
\end{table*}
\begin{table*}[ht]
\centering
\caption{
    Quantitative results over 50 episodes on real-world tasks.
}
\vspace{-0.1in}
\setlength{\tabcolsep}{2pt}
\renewcommand{\arraystretch}{1}
\resizebox{0.8\linewidth}{!}{
\begin{tabular}{ l | c c | c c | c c | c c }
\toprule
\multirow{2}{*}{Method} 
&\multicolumn{2}{c|}{PPO}
&\multicolumn{2}{c|}{CSC}
&\multicolumn{2}{c|}{USL}
&\multicolumn{2}{c}{\algname (ours)} \\
&\textit{Kuka-Pick} &\textit{InMoov-Stretch} 
&\textit{Kuka-Pick} &\textit{InMoov-Stretch} 
&\textit{Kuka-Pick} &\textit{InMoov-Stretch} 
&\textit{Kuka-Pick} &\textit{InMoov-Stretch}   \\ 
\midrule
Success Rate$\uparrow$ 
&\textbf{86\%} &\textbf{64\%} &42\% &44\% &56\% &52\%   &62\% &50\% \\
Avg \# of Collisions$\downarrow$
&3.6 &1.0 &1.74 &0.62 &0.88 &0.48 &\textbf{0.24} &\textbf{0.28} \\
Avg \# of Iterations$\downarrow$
&0 &0 &4.0 &\textbf{4.6} &2.6 &9.0 &\textbf{1.9} &6.8 \\
\bottomrule
\end{tabular}
}
\label{tab:real_result}
\vspace{-0.2in}
\end{table*}
%
\section{Experiment}\label{sec:exp}
To thoroughly assess the effectiveness of \algname, we conduct experiments
on both simulated and real-world safety-critical tasks.
During simulations, we test on one speed-planning problem, \textit{Ant-Run}, in addition
to three manipulation tasks, including \textit{Kuka-Reach}, \textit{Kuka-Pick} and
\textit{InMoov-Stretch}.
In real-world scenarios, we implement the real-world versions of \textit{Kuka-Pick}
and \textit{InMoov-Stretch}, which share the same tasks and safety specifications as
their simulated counterparts, to further assess \algname's robustness in real-world settings. 
%

\subsection{Experimental Setup}
\bfsection{Simulated tasks.} 
As shown in \figref{fig:sim_env}, four tasks are designed to evaluate
\algname along with other safe RL methods.
We have:
\begin{list}{\labelitemi}{\leftmargin=0.7em}
    \item \textit{Ant-Run:} 
    utilizes a simple quadrupedal robot to assess both effectiveness and time efficiency
    by constraining the robot to run within a certain velocity limit.
    %
    \item \textit{Kuka-Reach:}
    employs a 7-DOF Kuka robot arm to assess the effectiveness of \algname during human-robot
    interaction.
    %
    As shown in \figref{fig:sim_env}(b), the robot arm is trying to reach for a 
    table button while avoiding collision with a static cylinder (yellow in the figure) 
    which represents the human. 
    \item \textit{Kuka-Pick:}
    extends \textit{Kuka-Reach} to include dynamic obstacles where, as shown in 
    \figref{fig:sim_env}(c), the robot arm seeks to pick a tomato on the tree while
    avoiding collisions with a moving cylinder (yellow) which represents 
    moving human and other tomatoes (static obstables) on the tree.
    %
    %
    \item \textit{InMoov-Stretch:}
    evaluates high-dimensional control generalizability and robustness.
    It utilizes an \textit{InMoov} humanoid which has 53 actively controllable 
    joints~\cite{zhao2019bionic}. 
    As shown in \figref{fig:sim_env}(d), the robot is trying to reach for the fruit 
    on the tree with a human-like arm stretch. 
    %
\end{list}


\bfsection{Real-world tasks.} 
To assess the robustness of \algname on real-world control tasks, we convert 
\textit{Kuka-Pick} and \textit{InMoov-Stretch} to their real-world counterparts
through point-cloud reconstruction and key-point matching~\cite{zhao2019bionic}. 
%
The task and safety specifications remain identical as in the simulations.
Examples from successful ACS episode runs of initial and end joint states for both tasks  
are shown in \figref{fig:jaka_training}. 

\bfsection{Baselines.}
Six SOTA safe RL approaches, including Safety Layer~\cite{dalal2018safe}, 
Proximal Policy Optimization (PPO)~\cite{schulman2017proximal}, 
Recovery RL~\cite{thananjeyan2021recovery},
Conservative Safe Critics (CSC)~\cite{bharadhwaj2020conservative}
Unrolling Safety Layer (USL)~\cite{zhang2023evaluating}, 
and PPO-Lagrangian~\cite{ray2019benchmarking}, are included in our experiments.
Official implementations and recommended parameters are used for each method.
%

\bfsection{Metrics.}
We first consider common metrics~\cite{ray2019benchmarking} such as episodic return $J_r$, 
total cost rate $J_C$, success rate, average number of collisions and average number of 
iterations for iterative methods such as USL, CSC, and ACS.
%
Besides, we argue that episodic inference time is also a critical dimension to consider 
coupled with safety performance. 
Therefore, we propose a novel metric, \textit{temporal cost rate}, which measures the safety 
performance along with forward inference time.
Specifically, we have
$J_{TC}=\frac{\text{accumulated cost}}{\text{length of the 
episode}}*\overline{t}_{\text{forward}}$,
%
where lower values indicate better and faster safety performance.

\subsection{Results}
In all experimental results, we set $\alpha$ to 0.2. 
More investigations on the impact of varying $\alpha$ are elaborated in 
\secref{subsec:ablation}.
%
%

\bfsection{Simulation tasks results.}
The results of in-training performance of all methods across four simulation tasks are 
shown in \figref{fig:learning_curve}. The corresponding numerical result is shown in \tabref{tab:mean_performance}.
We see that \algname achieves the best task performance on \textit{Ant-Run} (+11.2\%), 
\textit{Kuka-Reach} (+48.9\%) and \textit{InMoov-Stretch} (+47.9\%) while preserving 
nearly zero safety violation, indicating that \algname can quickly learn from failures 
and find a better trade-off boundary to balance between task optimality and safety.
%
%
We also notice from temperal cost rate $J_{TC}$ in \tabref{tab:mean_performance} that \algname 
is a faster projection method thus could cater better towards time-critical tasks.
%
While other algorithms may achieve better task objective in 
\textit{Kuka-Pick} ($-24.2\%$), they either fail to guarantee safety nor provide 
real-time control response. 
On the contrary, by implicitly predicting the trajectory of the obstacle with its 
advantage network and bounding it with an adaptive chance constraint, 
\algname achieves nearly zero safety violation on this task.
%

\bfsection{Real-world tasks results.}
We evaluate the performance of \algname against USL, CSC and PPO in terms of 
success rate, average number of collisions, and number of iterations to derive each 
action on both tasks.
As shown in \tabref{tab:real_result}, results demonstrate that \algname effectively 
adapts to stochastic real-world tasks and exhibit better success rate (+30\%) while 
preserving low safety violation (-65\%) with fewer iterations on both tasks. 
Since PPO achieves the best task objective via a brute-force path, it severely violates 
safety constraint. 
On the contrary, ACS outperforms all other methods in balancing the task and safety 
considerations.

\subsection{Recovery Capability against Adversarial Policy}
To further assess efficiency and robustness, we design an experiment on \textit{Kuka-Reach} 
where we train an adversarial policy to induce the agent towards unsafe regions with 
higher cost. 
During an episode of 1000 steps, the target policy and adversarial policy take turns 
and alternate every 100 steps.

As shown in \figref{fig:recoverability}(a), \algname demonstrates its effectiveness in quickly 
decreasing the safety cost to $0$ whenever the adversarial policy induces the agent to 
an unsafe state.
In comparison, USL fails to fully recover before the adversary comes to play again. 
More notably, as shown in \figref{fig:recoverability}(b), \algname quickly recovers 
even from the worst unsafe states (i.e., $cost = 1$) and remains safe during its vigilance, 
while the other algorithms keep degenerating under adversarial attacks.
\begin{figure}[h]
    \centering
    \begin{minipage}{0.233\textwidth}
    \centering
    \subfigure[Step-wise cost signal]{
        \includegraphics[width=\textwidth]{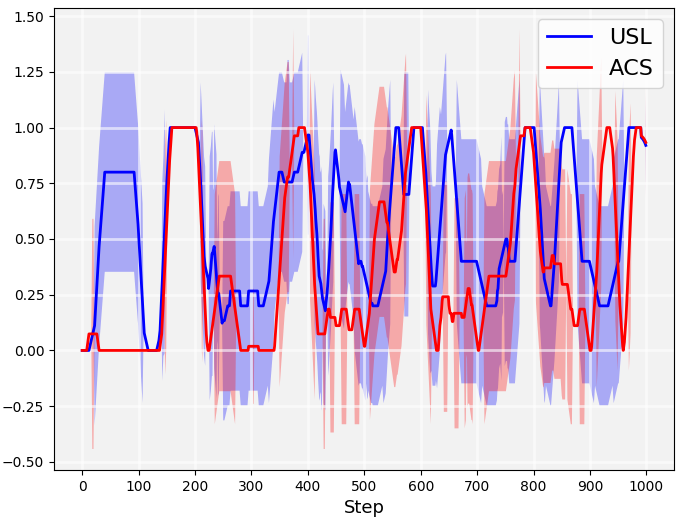}
    }
    \end{minipage}\hfill
    \begin{minipage}{0.233\textwidth}
    \centering
    \subfigure[Accumulated cost in an episode]{
        \includegraphics[width=\textwidth]{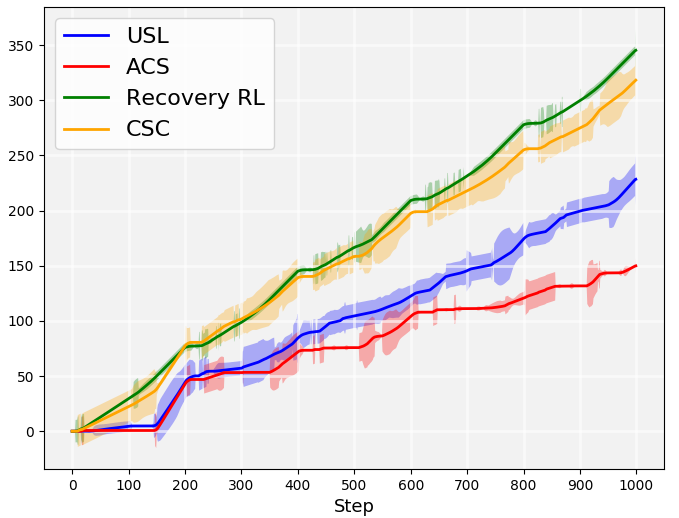}
    }
    \end{minipage}
    \caption{
    Recovery capabilities in terms of cost against an adversarial policy. 
    %
    }
    \vspace{-0.1in}
    \label{fig:recoverability}
\end{figure}

\subsection{\algname with Different Tolerance Thresholds}\label{subsec:ablation}
Additional experiments demonstrating how different tolerances $\alpha$ affect
the performance of \algname is further illustrated in \appref{subsec:ablation}.
Notably, even when tolerance $\alpha=1$, indicating that \algname depends solely on the 
sub-optimal policy layer without any projection, it still outperforms all competing 
methods. 
When tolerance $\alpha=0.2$ the controller can find the best trade-off between task 
and safety performance. 
\section{CONCLUSIONS}\label{sec:conclusions}
In this paper, we propose \textbf{A}daptive \textbf{C}hance-constraint \textbf{S}afeguard 
(\algname), a novel safe RL framework utilizing a hierarchical architecture to correct unsafe
actions yielded by the upper policy layer via a fast Quasi-Newton method. 
Through extensive theoretical analysis and experiments on both
simulated and real-world tasks, we demonstrate \algname's superiority in enforcing 
safety while preserving optimality and robustness across different scenarios. 
%
%

\clearpage
\section*{APPENDIX}
%
\subsection{Safety Probability Approximation via Value Function}\label{app:value_approx}
In this section, we draw from the Bellman equation~\cite{sutton2018reinforcement} and 
approximate the expected long-term chance-constrained safety probability $\Psi$ in 
\eqnref{eqn:psi_x_k} with a value network $V_C^\pi(x_k)$ updated via RL explorations. 
\begin{theorem}\label{thm:safe_probability}
    Let $r_{s_i} = 1\{\bigcap_{\tau \in \mathcal{T}(k)}x_j \in \mathcal{S}_C)\}$ be a 
    Bernoulli random variable that indicates joint safety constraint satisfaction, and 
    $r_{C_i} = 1 - r_{s_i}$ be the one-shot indicator of the complementary unsafe set, 
    the expected safe possibility at $x_k$ is $\Psi(x_k) = 1 - V_C^{\pi}(x_{k})$.
\end{theorem}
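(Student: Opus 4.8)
The plan is to exhibit a safety cost signal whose return along a trajectory is \emph{exactly} the Bernoulli variable $r_{C_i}$, and then to use the Bellman equation to identify $V_C^{\pi}(x_k)$ with the expectation of that return, i.e. with the probability $P(\exists\, j\in\mathcal T(k): x_j\notin\mathcal S_C)$. First I would fix the cost function $C_i$ so that $C_i(x_j,u_j,x_{j+1})$ fires exactly once — at the first time the trajectory exits $\mathcal S_C$ — and is zero thereafter; equivalently, augment the CMDP of \eqnref{eq:system} with an absorbing ``unsafe'' state entered the first time $x_j\notin\mathcal S_C$, in which all subsequent cost is zero. Under this construction the accumulated cost $\sum_{j\in\mathcal T(k)} C_i(x_j,u_j,x_{j+1})$ along any sample path equals $1$ precisely when some $x_j\notin\mathcal S_C$ with $j\in\mathcal T(k)$, and equals $0$ otherwise; that is, it is a realization of $r_{C_i}=1-r_{s_i}$.

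Next I would take expectations. By definition $V_C^{\pi}(x_k)=\mathbb E_{\tau\sim\pi}\big[\sum_{j\in\mathcal T(k)}\gamma^{\,j-k}C_i(x_j,u_j,x_{j+1})\big]$; with the undiscounted convention $\gamma=1$ (for which the absorbing construction keeps the sum bounded by $1$ almost surely), this is $\mathbb E[r_{C_i}]=P(r_{C_i}=1)$. Since $r_{C_i}$ indicates the complementary unsafe event, $P(r_{C_i}=1)=1-P\big(\cap_{j\in\mathcal T(k)}x_j\in\mathcal S_C\big)=1-\Psi(x_k)$, which rearranges to the claim $\Psi(x_k)=1-V_C^{\pi}(x_k)$. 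To make the link with the Bellman equation explicit I would also verify the one-step recursion $V_C^{\pi}(x_k)=\mathbb E_{u_k\sim\pi,\,x_{k+1}\sim P}\big[C_i(x_k,u_k,x_{k+1})+V_C^{\pi}(x_{k+1})\big]$ and check it is consistent with the recursion for $1-\Psi$ obtained by conditioning on whether the next state is safe, namely $1-\Psi(x_k)=\mathbb E\big[\mathbf{1}\{x_{k+1}\notin\mathcal S_C\}+(1-\Psi(x_{k+1}))\,\mathbf{1}\{x_{k+1}\in\mathcal S_C\}\big]$, so that $1-\Psi$ is a fixed point of the same operator and the identification is unique.

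The main obstacle is the infinite-horizon bookkeeping: one must argue the accumulated cost is almost surely well defined and bounded so that summation and expectation commute and the series does not blow up, and one must be careful about whether the index set $\mathcal T(k)=\{k,k+1,\dots\}$ contributes a cost term for $x_k$ itself — i.e. whether $\Psi(x_k)$ is a recovery-type quantity that allows $x_k$ to already be unsafe, or is defined only for $x_k\in\mathcal S_C$. The absorbing-state reformulation is precisely what removes both difficulties, so the bulk of the argument is setting that construction up cleanly; the remaining algebra is the short computation above.
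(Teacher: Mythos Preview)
Your proposal is correct and follows essentially the same route as the paper: both expand $V_C^\pi$ via the Bellman equation, identify it with the expectation of the unsafe indicator $r_{C_i}$ summed over trajectories, and conclude $\Psi(x_k)=1-V_C^\pi(x_k)$. Your absorbing-state construction and explicit undiscounted convention $\gamma=1$ make rigorous a point the paper glosses over---its own derivation retains the weight $\gamma^T$ in the trajectory sum yet asserts equality with $1-\Psi$, which is exact only under the convention you adopt.
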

\begin{proof}
From~\cite{sutton2018reinforcement} we have
\begin{subequations}
\begin{align}
V_C^{\pi}(x_{k+1}) & = \mathbb{E}_{\tau \sim \pi_\theta} \left[ \sum_i \mathcal{J}_{C_i} \right]  \label{eqn:proof_1}\\
& = \frac{\sum_{u\sim \pi_\theta} \sum_{x} \sum_i{(r_{c_i}}+\gamma V_C^{\pi}(x_{k+1}))}{\sum_{u\sim \pi_\theta}\sum_{x}\sum_i 1} \\
& = \sum_{\{\tau:r_{c_i}(\tau_T)=1\}}\gamma^T P(\tau_i) * 1 \label{eqn:unsafe_prob}\\
& = 1 - \Psi(x_{k+1})
\end{align}
\end{subequations}
Notice that the RHS of \eqnref{eqn:unsafe_prob} is essentially the possibility of unsafe trajectory 
$P(\tau_i)$ weighted by $\gamma^T$. 
Therefore, the expected safe possibility at $x_k$ is $\Psi(x_k) = 1 - V_C^{\pi}(x_{k})$.
\end{proof}

\subsection{In-training Safety Certificate for Dynamic Policy}\label{app:dynamic_policy}
%
In this section, we provide the detail proof of \thmref{thm:constraint}.
For simplicity, here we consider only one safety constraint (i.e., $|C|=1$) and omit the 
subscript $i$. 
\begin{proof}
Since $\Psi(x_k)$ can be approximated by the value function of safety cost 
$V_C^\pi(x_k)$ (\thmref{thm:safe_probability}) and the advantage function 
$A^{\pi_\theta}_C(x_k, u_k)=\mathbb{E}
[V_C(x_{k+1}|x_k, \pi_{\theta}(x_k))]-V_C^{\pi_{\theta}}(x_k)$, 
we first certify safety convergence for a stationary policy $\pi_\theta$ based on 
forward invariance:
\begin{subequations}
\begin{align}
\mathbb{E}[V_C(x_{k+1})|x _k, \pi_{\theta}(x_k)]-V_C^{\pi_{\theta}}(x_k) &\le \mathcal{F}(\alpha - V_C^{\pi_{\theta}}(x_k)) \nonumber   \\
&\le \alpha - V_C^{\pi_{\theta}}(x_k) \label{eqn:bound_proof}   \\
 \Rightarrow \hspace{0.05in} \mathbb{E}[V_C(x_{k+1}|x_k, \pi_{\theta}(x_k))] 
&\le \alpha \label{eqn:bound_conclusion}
\end{align}
\end{subequations}
%
where \eqnref{eqn:bound_proof} is derived by convexity.
%
Since no system dynamics model is available in our settings, it is impossible to guarantee 
zero in-training safety without any failures\cite{bharadhwaj2020conservative}. 
%
%
Therefore, we derive an upper-bound for policy updates using the trust-region 
method~\cite{schulman2015trust} to ensure our safety certificate, 
as previously defined for the stationary policy case, \eqnref{eqn:bound_conclusion} 
holds under dynamic policy updates as well. 
%
The trust-region method constrains the update of policy parameters by using the total variation distance \(D_{\text{TV}} = D_{\text{TV}}(\pi_{\theta} \| \pi_{\theta_{\text{old}}})\) to ensure the new policy \(\pi_\theta\) does not deviate significantly from the old policy \(\pi_{\theta_{\text{old}}}\).
Therefore, following the trust-region derivations in~\cite{schulman2015trust} and~\cite{achiam2017constrained}, we start with the following inequality for the safety value function under the new policy:
\begin{equation}\label{eqn:safe_bound}
V_C^{\pi_{\theta}} - V_C^{\pi_{\theta_{\text{old}}}} \leq \frac{1}{1-\gamma} \mathbb{E}_{x \sim \rho_{\theta_{\text{old}}}, u \sim \pi_{\theta}}[A_C^{\pi_{\theta_{\text{old}}}}] + \beta D_{\text{TV}},
\end{equation}
where \(\beta\) is a positive coefficient that weighs the total variation distance \(D_{\text{TV}}\) in the safety cost inequality. Here, \(\beta = \frac{2\gamma \max |E_{\tau \sim \pi_\theta} A_C^{\pi_{\theta_{\text{old}}}}|}{1-\gamma}\), \(\tau\) denotes a trajectory resulting from policy \(\pi\), and \(\gamma\) is the discount factor that controls the expected convergence rate.

Assuming the safety certificate holds for the old policy \(\pi_{\theta_{\text{old}}}\), which implies that the expected advantage under any state-action pair does not exceed a threshold \(\alpha\), we have the following:
\begin{equation}\label{eqn:old_policy_safety}
\mathbb{E}_{x \sim \rho_{\theta_{\text{old}}}, u \sim \pi_{\theta}}[A_C^{\pi_{\theta_{\text{old}}}}] \leq \alpha.
\end{equation}
To maintain the safety constraint for the updated policy, we derive an upper bound for \(D_{\text{TV}}\) by rearranging equation Eqn.~(\ref{eqn:safe_bound}) and considering the safety constraint in Eqn.~(\ref{eqn:old_policy_safety}):
\begin{equation}\label{eqn:upper_Dtv}
D_{\text{TV}} \leq \frac{(1-\gamma)(\alpha - \mathbb{E}_{x \sim \rho_{\theta_{\text{old}}}, u \sim \pi_{\theta}}[A_C^{\pi_{\theta_{\text{old}}}}])}{\beta}.
\end{equation}
With the upper bound for policy updates \(D_{\text{TV}}\) in equation Eqn.~(\ref{eqn:upper_Dtv}), we can easily derive a bound for the updated safety cost value function in Eqn.~(\ref{eqn:safe_bound}). Finally we have essentially certificated the updated policy to remain within a safe region defined by the trust-region method and ensure Eqn.~(\ref{eqn:bound_conclusion}) is grounded for the case of dynamic policy update.

\end{proof}

\subsection{Ablation Study on Tolerance Level $\alpha$}\label{subsec:ablation}
\begin{table}[h]
\caption{Quantitative results over tolerance level $\alpha$ on \textit{Kuka-Reach}.}
\centering
\begin{tabular}{l|lll}
\toprule
\textit{Kuka-Reach}  &$J_r$   & $J_C$  & $J_{TC}$ (s)            \\
\midrule
ACS ($\alpha=1)$   & -221.9 $\pm$ 87.40 & 0.05 $\pm$ 0.01 & 0                \\
ACS ($\alpha=0.6$) & -178.4 $\pm$ 85.20 & 0.02 $\pm$ 0.01 & 44e-4                 \\
ACS ($\alpha=0.2$) & -138.4 $\pm$ 60.89  & 0.0032 $\pm$ 0.04   & 1e-2\\
ACS ($\alpha=0.05$) & -196.3 $\pm$ 70.00 & 0.0028 $\pm$ 0.02   & 31e-3  \\
\bottomrule
\end{tabular}
\label{tab:alpha_ablation}
\end{table}


\section*{ACKNOWLEDGMENT}
The authors would like to thank Mahsa Ghasemi, Siddharth Gangadhar, Yorie Nakahira, 
Weiye Zhao, Changliu Liu for their precious comments and suggestions. Liang Gong would 
also like to thank JAKA Robotics for providing with the hardware for experimentation in this 
paper.


\bibliographystyle{plain}
\bibliography{ref}

\begin{thebibliography}{10}

\bibitem{achiam2017constrained}
Joshua Achiam, David Held, Aviv Tamar, and Pieter Abbeel.
\newblock Constrained policy optimization.
\newblock In {\em International conference on machine learning}, pages 22--31.
  PMLR, 2017.

\bibitem{alshiekh2018safe}
Mohammed Alshiekh, Roderick Bloem, R{\"u}diger Ehlers, Bettina K{\"o}nighofer,
  Scott Niekum, and Ufuk Topcu.
\newblock Safe reinforcement learning via shielding.
\newblock In {\em Proceedings of the AAAI conference on artificial
  intelligence}, volume~32, 2018.

\bibitem{altman2021constrained}
Eitan Altman.
\newblock {\em Constrained Markov decision processes}.
\newblock Routledge, 2021.

\bibitem{awasthi2022theory}
Pranjal Awasthi, Corinna Cortes, Yishay Mansour, and Mehryar Mohri.
\newblock A theory of learning with competing objectives and user feedback.
\newblock In {\em Progress and Challenges in Building Trustworthy Embodied AI},
  2022.

\bibitem{bertsekas2019reinforcement}
Dimitri Bertsekas.
\newblock {\em Reinforcement learning and optimal control}.
\newblock Athena Scientific, 2019.

\bibitem{bharadhwaj2020conservative}
Homanga Bharadhwaj, Aviral Kumar, Nicholas Rhinehart, Sergey Levine, Florian
  Shkurti, and Animesh Garg.
\newblock Conservative safety critics for exploration.
\newblock {\em arXiv preprint arXiv:2010.14497}, 2020.

\bibitem{bohez2019value}
Steven Bohez, Abbas Abdolmaleki, Michael Neunert, Jonas Buchli, Nicolas Heess,
  and Raia Hadsell.
\newblock Value constrained model-free continuous control.
\newblock {\em arXiv preprint arXiv:1902.04623}, 2019.

\bibitem{chen2021primal}
Yi~Chen, Jing Dong, and Zhaoran Wang.
\newblock A primal-dual approach to constrained markov decision processes.
\newblock {\em arXiv preprint arXiv:2101.10895}, 2021.

\bibitem{cheng2019end}
Richard Cheng, G{\'a}bor Orosz, Richard~M Murray, and Joel~W Burdick.
\newblock End-to-end safe reinforcement learning through barrier functions for
  safety-critical continuous control tasks.
\newblock In {\em Proceedings of the AAAI conference on artificial
  intelligence}, volume~33, pages 3387--3395, 2019.

\bibitem{chow2018lyapunov}
Yinlam Chow, Ofir Nachum, Edgar Duenez-Guzman, and Mohammad Ghavamzadeh.
\newblock A lyapunov-based approach to safe reinforcement learning.
\newblock {\em Advances in neural information processing systems}, 31, 2018.

\bibitem{dalal2018safe}
Gal Dalal, Krishnamurthy Dvijotham, Matej Vecerik, Todd Hester, Cosmin
  Paduraru, and Yuval Tassa.
\newblock Safe exploration in continuous action spaces.
\newblock {\em arXiv preprint arXiv:1801.08757}, 2018.

\bibitem{donti2021dc3}
Priya~L Donti, David Rolnick, and J~Zico Kolter.
\newblock Dc3: A learning method for optimization with hard constraints.
\newblock {\em arXiv preprint arXiv:2104.12225}, 2021.

\bibitem{ethier2009markov}
Stewart~N Ethier and Thomas~G Kurtz.
\newblock {\em Markov processes: characterization and convergence}.
\newblock John Wiley \& Sons, 2009.

\bibitem{eysenbach2017leave}
Benjamin Eysenbach, Shixiang Gu, Julian Ibarz, and Sergey Levine.
\newblock Leave no trace: Learning to reset for safe and autonomous
  reinforcement learning.
\newblock {\em arXiv preprint arXiv:1711.06782}, 2017.

\bibitem{ferlez2020shieldnn}
James Ferlez, Mahmoud Elnaggar, Yasser Shoukry, and Cody Fleming.
\newblock Shieldnn: A provably safe nn filter for unsafe nn controllers.
\newblock {\em arXiv preprint arXiv:2006.09564}, 2020.

\bibitem{ganai2024iterative}
Milan Ganai, Zheng Gong, Chenning Yu, Sylvia Herbert, and Sicun Gao.
\newblock Iterative reachability estimation for safe reinforcement learning.
\newblock {\em Advances in Neural Information Processing Systems}, 36, 2024.

\bibitem{gangadhar2022adaptive}
Siddharth Gangadhar, Zhuoyuan Wang, Haoming Jing, and Yorie Nakahira.
\newblock Adaptive safe control for driving in uncertain environments.
\newblock In {\em 2022 IEEE Intelligent Vehicles Symposium (IV)}, pages
  1662--1668. IEEE, 2022.

\bibitem{garcia2015comprehensive}
Javier Garc{\i}a and Fernando Fern{\'a}ndez.
\newblock A comprehensive survey on safe reinforcement learning.
\newblock {\em Journal of Machine Learning Research}, 16(1):1437--1480, 2015.

\bibitem{he2023autocost}
Tairan He, Weiye Zhao, and Changliu Liu.
\newblock Autocost: Evolving intrinsic cost for zero-violation reinforcement
  learning.
\newblock {\em arXiv preprint arXiv:2301.10339}, 2023.

\bibitem{jing2022probabilistic}
Haoming Jing and Yorie Nakahira.
\newblock Probabilistic safety certificate for multi-agent systems.
\newblock In {\em 2022 IEEE 61st Conference on Decision and Control (CDC)},
  pages 5343--5350. IEEE, 2022.

\bibitem{johannink2019residual}
Tobias Johannink, Shikhar Bahl, Ashvin Nair, Jianlan Luo, Avinash Kumar,
  Matthias Loskyll, Juan~Aparicio Ojea, Eugen Solowjow, and Sergey Levine.
\newblock Residual reinforcement learning for robot control.
\newblock In {\em 2019 International Conference on Robotics and Automation
  (ICRA)}, pages 6023--6029. IEEE, 2019.

\bibitem{koller2018learning}
Torsten Koller, Felix Berkenkamp, Matteo Turchetta, and Andreas Krause.
\newblock Learning-based model predictive control for safe exploration.
\newblock In {\em 2018 IEEE conference on decision and control (CDC)}, pages
  6059--6066. IEEE, 2018.

\bibitem{kumar2020conservative}
Aviral Kumar, Aurick Zhou, George Tucker, and Sergey Levine.
\newblock Conservative q-learning for offline reinforcement learning.
\newblock {\em Advances in Neural Information Processing Systems},
  33:1179--1191, 2020.

\bibitem{liang2018accelerated}
Qingkai Liang, Fanyu Que, and Eytan Modiano.
\newblock Accelerated primal-dual policy optimization for safe reinforcement
  learning.
\newblock {\em arXiv preprint arXiv:1802.06480}, 2018.

\bibitem{liu1989limited}
Dong~C Liu and Jorge Nocedal.
\newblock On the limited memory bfgs method for large scale optimization.
\newblock {\em Mathematical programming}, 45(1-3):503--528, 1989.

\bibitem{liu2021policy}
Yongshuai Liu, Avishai Halev, and Xin Liu.
\newblock Policy learning with constraints in model-free reinforcement
  learning: A survey.
\newblock In {\em The 30th International Joint Conference on Artificial
  Intelligence (IJCAI)}, 2021.

\bibitem{ma2021learn}
Haitong Ma, Changliu Liu, Shengbo~Eben Li, Sifa Zheng, Wenchao Sun, and Jianyu
  Chen.
\newblock Learn zero-constraint-violation policy in model-free constrained
  reinforcement learning.
\newblock {\em arXiv preprint arXiv:2111.12953}, 2021.

\bibitem{petsagkourakis2020constrained}
Panagiotis Petsagkourakis, Ilya~Orson Sandoval, Eric Bradford, Dongda Zhang,
  and Ehecatl~Antonio del Rio-Chanona.
\newblock Constrained reinforcement learning for dynamic optimization under
  uncertainty.
\newblock {\em IFAC-PapersOnLine}, 53(2):11264--11270, 2020.

\bibitem{stable-baselines3}
Antonin Raffin, Ashley Hill, Adam Gleave, Anssi Kanervisto, Maximilian
  Ernestus, and Noah Dormann.
\newblock Stable-baselines3: Reliable reinforcement learning implementations.
\newblock {\em Journal of Machine Learning Research}, 22(268):1--8, 2021.

\bibitem{ray2019benchmarking}
Alex Ray, Joshua Achiam, and Dario Amodei.
\newblock Benchmarking safe exploration in deep reinforcement learning.
\newblock {\em arXiv preprint arXiv:1910.01708}, 7(1):2, 2019.

\bibitem{schulman2015trust}
John Schulman, Sergey Levine, Pieter Abbeel, Michael Jordan, and Philipp
  Moritz.
\newblock Trust region policy optimization.
\newblock In {\em International conference on machine learning}, pages
  1889--1897. PMLR, 2015.

\bibitem{schulman2017proximal}
John Schulman, Filip Wolski, Prafulla Dhariwal, Alec Radford, and Oleg Klimov.
\newblock Proximal policy optimization algorithms.
\newblock {\em arXiv preprint arXiv:1707.06347}, 2017.

\bibitem{selim2022safe}
Mahmoud Selim, Amr Alanwar, Shreyas Kousik, Grace Gao, Marco Pavone, and Karl~H
  Johansson.
\newblock Safe reinforcement learning using black-box reachability analysis.
\newblock {\em IEEE Robotics and Automation Letters}, 7(4):10665--10672, 2022.

\bibitem{srinivasan2020learning}
Krishnan Srinivasan, Benjamin Eysenbach, Sehoon Ha, Jie Tan, and Chelsea Finn.
\newblock Learning to be safe: Deep rl with a safety critic.
\newblock {\em arXiv preprint arXiv:2010.14603}, 2020.

\bibitem{sutton2018reinforcement}
Richard~S Sutton and Andrew~G Barto.
\newblock {\em Reinforcement learning: An introduction}.
\newblock MIT press, 2018.

\bibitem{tessler2018reward}
Chen Tessler, Daniel~J Mankowitz, and Shie Mannor.
\newblock Reward constrained policy optimization.
\newblock {\em arXiv preprint arXiv:1805.11074}, 2018.

\bibitem{thananjeyan2021recovery}
Brijen Thananjeyan, Ashwin Balakrishna, Suraj Nair, Michael Luo, Krishnan
  Srinivasan, Minho Hwang, Joseph~E Gonzalez, Julian Ibarz, Chelsea Finn, and
  Ken Goldberg.
\newblock Recovery rl: Safe reinforcement learning with learned recovery zones.
\newblock {\em IEEE Robotics and Automation Letters}, 6(3):4915--4922, 2021.

\bibitem{thananjeyan2020safety}
Brijen Thananjeyan, Ashwin Balakrishna, Ugo Rosolia, Felix Li, Rowan
  McAllister, Joseph~E Gonzalez, Sergey Levine, Francesco Borrelli, and Ken
  Goldberg.
\newblock Safety augmented value estimation from demonstrations (saved): Safe
  deep model-based rl for sparse cost robotic tasks.
\newblock {\em IEEE Robotics and Automation Letters}, 5(2):3612--3619, 2020.

\bibitem{wagener2021safe}
Nolan~C Wagener, Byron Boots, and Ching-An Cheng.
\newblock Safe reinforcement learning using advantage-based intervention.
\newblock In {\em International Conference on Machine Learning}, pages
  10630--10640. PMLR, 2021.

\bibitem{wang2022myopically}
Zhuoyuan Wang, Haoming Jing, Christian Kurniawan, Albert Chern, and Yorie
  Nakahira.
\newblock Myopically verifiable probabilistic certificate for long-term safety.
\newblock In {\em 2022 American Control Conference (ACC)}, pages 4894--4900.
  IEEE, 2022.

\bibitem{wei2019safe}
Tianhao Wei and Changliu Liu.
\newblock Safe control algorithms using energy functions: A uni ed framework,
  benchmark, and new directions.
\newblock In {\em 2019 IEEE 58th Conference on Decision and Control (CDC)},
  pages 238--243. IEEE, 2019.

\bibitem{wen2018constrained}
Min Wen and Ufuk Topcu.
\newblock Constrained cross-entropy method for safe reinforcement learning.
\newblock {\em Advances in Neural Information Processing Systems}, 31, 2018.

\bibitem{zhang2023evaluating}
Linrui Zhang, Qin Zhang, Li~Shen, Bo~Yuan, Xueqian Wang, and Dacheng Tao.
\newblock Evaluating model-free reinforcement learning toward safety-critical
  tasks.
\newblock In {\em Proceedings of the AAAI Conference on Artificial
  Intelligence}, volume~37, pages 15313--15321, 2023.

\bibitem{zhang2020first}
Yiming Zhang, Quan Vuong, and Keith Ross.
\newblock First order constrained optimization in policy space.
\newblock {\em Advances in Neural Information Processing Systems},
  33:15338--15349, 2020.

\bibitem{zhao2019bionic}
Lujie Zhao, Liang Gong, Xudong Li, Chen Yang, Zhaorun Chen, Yixiang Huang, and
  Chengliang Liu.
\newblock A bionic arm mechanism design and kinematic analysis of the humanoid
  traffic police.
\newblock In {\em 2019 IEEE 9th Annual International Conference on CYBER
  Technology in Automation, Control, and Intelligent Systems (CYBER)}, pages
  1606--1611. IEEE, 2019.

\bibitem{zhao2023state}
Weiye Zhao, Tairan He, Rui Chen, Tianhao Wei, and Changliu Liu.
\newblock State-wise safe reinforcement learning: A survey.
\newblock {\em arXiv preprint arXiv:2302.03122}, 2023.

\bibitem{zhao2021model}
Weiye Zhao, Tairan He, and Changliu Liu.
\newblock Model-free safe control for zero-violation reinforcement learning.
\newblock In {\em 5th Annual Conference on Robot Learning}, 2021.

\bibitem{zhao2023probabilistic}
Weiye Zhao, Tairan He, and Changliu Liu.
\newblock Probabilistic safeguard for reinforcement learning using safety index
  guided gaussian process models.
\newblock In {\em Learning for Dynamics and Control Conference}, pages
  783--796. PMLR, 2023.

\end{thebibliography}

\end{document}